\newcommand{\Fb}{\bar{F}}
\newcommand{\Hinfty}{{H}^{\infty}}
\newcommand{\R}{\bbbr}
\newcommand{\tlambda}{{\lambda}_*}
\newcommand{\dd}{\mathrm{d}}
\newcommand{\Nhb}{B_0}
\newcommand{\pa}{\partial}
\newcommand{\Ibb}{\mathbb{I}}
\DeclareMathOperator{\EX}{\mathbb{E}}
\DeclareMathOperator{\PX}{\mathbb{P}}
\spnewtheorem{assumption}{Assumption}{\bfseries}{\itshape}
\begin{document}
	\title{Global Convergence of Sobolev Training\\ for Overparameterized Neural Networks}
	\titlerunning{Global Convergence of Sobolev Training}
	%
	\author{Jorio Cocola\inst{1}\and
		Paul Hand\inst{1,2}}
	\authorrunning{J. Cocola and P. Hand}
	%
	\institute{Department of Mathemathics, Northeastern University,
		\and
		Khoury College of Computer Sciences, Northeastern University, \\Boston MA, USA. \\
		\email{\{cocola.j,p.hand\}@northeastern.edu}}
	\maketitle              
	\begin{abstract}
		Sobolev loss is used when training a network to approximate the values and derivatives of a target function at a prescribed set of input points. Recent works have demonstrated its successful applications in 
		various tasks such as distillation or synthetic gradient prediction. In this work we prove that an overparameterized two-layer relu neural network trained on the Sobolev loss with gradient flow from random initialization can fit any given function values and any given directional derivatives,
		under a separation condition on the input data.
		\keywords{Gradient Flow \and Neural Networks \and Sobolev training.}
	\end{abstract}
	
	\section{Introduction}

Deep neural networks are ubiquitous and have established state of the art performances in a wide variety of applications and fields.  
These networks often have a large number of parameters which are tuned via   gradient descent (or its variants) on an empirical risk minimization task. In particular 
in supervised learning it is often required that the output of the network fits certain values/labels that can be thought as coming from an unknown target function.  
In many settings, though, additional prior information on the task or target function might be available, and enforcing them might be of interest. One such example 
is the case of high order derivatives of the unknown target function, which, as shown in  \cite{czarnecki2017sobolev},  naturally arises in problems such as distillation, in which a large teacher network is used to train a more compact student  
network, or prediction of synthetic gradients for training deep complex models.  Therefore \cite{czarnecki2017sobolev} proposed  the \textit{``Sobolev training"} which given training inputs $\{x_i\}_{i=1}^n$, attempts to minimizes the following empirical risk:
\begin{equation}\label{eq:Sobolev}
	L(W) = \sum_{i=1}^n \big[ \ell(f(W,x_i), f^*(x_i)) + \sum_{j=1}^K \ell_j(D_x^j f(W,x_i),D_x^j f^*(x_i) )	\big]
\end{equation}
where $f(W,x)$ is a neural network with input $x$ and parameters $W$,
$f^*$ denotes the target function, $\ell$ is a loss penalizing the deviation from the outputs of $f$, and $\ell_j$ are loss functions penalizing the deviations 
of the $j$-th derivative $D_x^j f$ of the network $f$ with respect to $x$ from the $j$-th derivative $D_x^j f^*$  of the target $f^*$.

The empirical successes of  Sobolev training have been demonstrated in a number of works. In \cite{czarnecki2017sobolev} it was shown that Sobolev training leads to 
smaller generalization errors than standard training, in tasks such as distillation and synthetic gradient prediction especially in the low data regime. Similar results were also obtained for transfer learning via Jacobian matching in \cite{srinivas2018knowledge}. Earlier Sobolev training was applied 
in \cite{simard1992tangent} in order to enforce invariance to translations and small rotations. More recently, instead,  Sobolev training has been used in the context of anisotropic hyperelasticity in order to  improve the predictions on the stress tensor (derivative of the network with respect to the input deformation tensor) in \cite{vlassis2020geometric}. 
Finally, the idea of Sobolev training is also tightly connected to 
other techniques which have been recently successfully employed, such as attention matching in student distillation \cite{zagoruyko2016paying} and \cite{czarnecki2017sobolev}, or convex data augmentation for generalization and robustness improvement \cite{zhang2017mixup}.

On the theoretical side, justification for Sobolev training was given by 
\cite{czarnecki2017sobolev}, extending the classical work of Hornik  \cite{hornik1991approximation} and giving universal approximation properties of neural networks with relu activation function in Sobolev spaces. This result was then further improved  for deep networks in \cite{guhring2019error}. 
While these works motivated the use of the Sobolev loss \eqref{eq:Sobolev}, 
conditions under which it can be successfully minimized were not given.
In particular even though the network used in Sobolev training are usually shallow, the resulting loss  \eqref{eq:Sobolev} is highly non-convex and therefore the success 
of first order methods is not a priori guaranteed. 

In this paper we study a two-layer relu neural network trained with a Sobolev loss 
when at each input point the output values and a set of directional derivatives of the target function are given. Leveraging recent results on training with standard losses
\cite{arora2019fine,zou2018stochastic,allen2018convergence,zou2018stochastic,oymak2019towards}
we show that if the network is sufficiently overparameterized, the weights are randomly initialized, 
and the data satisfy certain natural non-degeneracy assumptions, Gradient Flow achieves a global minimum.

	\section{Main Result}
We study the training of neural networks with \textit{``Directional Sobolev Training''}. 
In particular we assume we are given training data
\begin{equation}\label{eq:tr_data}
\big\{{x}_i, y_i, V_i , h_i \big\}_{i=1}^n,
\end{equation}
where $x_i  \in \R^d$, $y_i \in \R$, $h_i \in \R^k$ and $V_i \in \R^{d \times k}$ with orthonormal columns (unit Euclidean norm and pairwise orthogonal).
This training data can be thought as being generated by a differentiable
function $f^*: \R^d \to \R$	according to
\begin{equation}\label{eq:fstar_data}
y_i = f^*(x_i), \quad \text{and}\quad h_i = V_i^T \nabla f^*(x_i) \qquad \text{for}\; i=1, \dots, n
\end{equation}
so that each entry of the vector $h_i$ corresponds to a directional derivative 
of $f^*$ in the direction given by the corresponding column of the matrix $V_i$. 	
We will denote by $y \in \R^n$ and $h \in \R^{n k}$
the vectors with $n$ entries $y_i$ and $n$ blocks $h_i$ respectively.

In this work we study the training of a two-layer neural network with \textit{width}~$m$:
\begin{equation}\label{eq:NN}
f(W,{x}) = \sum_{r = 1}^{m} a_r \sigma( w_r^T x),
\end{equation}
where $a_r$ are fixed at initialization, $\sigma(z) = \max(0, z)$ is the relu activation function, and $W$ is the weight matrix with rows $\{w_r^T\}_{r=1}^{m}$. 
The network  weights $\{w_r\}_{r=1}^{m}$ are learned by minimizing the \textit{Directional Sobolev Loss}
\begin{equation*}
	\min_{W \in R^{m\times d}} L(W) :=  \frac{1}{2} \sum_{i = 1}^{n} (f(W,x_i) - y_i )^2 + \frac{1}{2} \sum_{i = 1}^{n} \| V_i^T \nabla_x f(W,x_i) - h_{i} \|_2^2.
\end{equation*}
via the \textit{Gradient Flow}
\begin{equation}\label{eq:GF}
	\frac{\dd w_r(t)}{\dd t} = - \frac{\pa L(W(t))}{\pa w_r}.
\end{equation}
Note that even though the relu activation function $\sigma$ is not differentiable, we let $\sigma'(z) = \Ibb\{z > 0\}$ and $\sigma''(z) = 0$.  This corresponds to the choice made in most of the deep learning libraries, and  
the dynamical system \eqref{eq:GF} can then be seen as the one followed in practice when using Sobolev training. Explicit formulas for the partial derivatives are given in the next section.

In this work we prove  that for wide enough networks, gradient flow converges to a global minimizer of $L(W)$.
In particular define the vectors of residuals $e(t) \in \R^n$ and $S(t) \in \R^{k\cdot n}$ with coordinates
\begin{equation}\label{eq:residuals}
[e(t)]_{i} = y_i - f(W(t), x_i) \qquad \big[S(t)\big]_i = h_i -  V_i^T \nabla_x f(W(t),x_i).
\end{equation}
We show that $e(t) \to 0$ and $S(t) \to 0$ as $t \to \infty$, under the following assumption of non-degeneracy  of the training data.
\begin{assumption}\label{hyp:parallel_indp}
	The data are normalized so that $\|x_i\|_2 = 1$ and there exist $0 < \delta_1$ and $0 \leq  \delta_2 < k^{-1}$ such that the following hold:
	\begin{equation}\label{eq:sep_x}
		\min_{i \neq j}(\|x_i - x_j\|_2, \|x_i + x_j\|_2) \geq \delta_1,
	\end{equation}
	and for every $i = 1, \dots, n$:
	\begin{equation}\label{eq:sep_v}
		\max_{1 \leq j\leq k} |v_{i, j}^T x_i | \leq \delta_2,
	\end{equation}
	where  $v_{i,j} $ are the columns of $V_i$.
\end{assumption}

 Given $w \in \R^d$ define the following ``feature maps'':
\[
\phi_w(x_i) := \sigma'(w^T x_i) x_i,
\] 
\[
\psi_w(x_i) := \sigma'(w^Tx_i) V_i,
\]
and matrix:
\begin{equation*}
	\Omega(w) = [\phi_w(x_1), \dots, \phi_w(x_n), \psi_w(x_1), \dots \psi_w(x_n)] \;  \in \R^{d \times (k+1)n}.
\end{equation*}
The next quantity plays an important role in the proof of convergence 
of the gradient flow \eqref{eq:GF}.
\begin{definition}
	Define the matrix $H^{\infty} \in \R^{n(k+1)\times n(k+1)}$ with entries given by $[H^{\infty}]_{i,j} = \EX_{w \sim \mathcal{N}(0,I_d)}[\Omega(w)^T\Omega(w)]_{i,j}$, and let $\tlambda$ be its smallest eigenvalue.
\end{definition}
Under the non-degeneracy of the training set we show that $H^{\infty}$ is strictly positive definite.
\begin{proposition}\label{prop:tlambda}
	Under the Assumptions \ref{hyp:parallel_indp} the minimum eigenvalue of $\Hinfty$ obeys:
	\[
		\tlambda \geq \frac{(1 - k \delta_2) \delta_1 }{100\, n^2} .
	\]
\end{proposition}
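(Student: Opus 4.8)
The plan is to use the variational characterization
\[
\tlambda \;=\; \min_{\|u\|_2 = 1} u^T \Hinfty u \;=\; \min_{\|u\|_2 = 1} \EX_{w \sim \mathcal{N}(0,I_d)} \|\Omega(w) u\|_2^2 ,
\]
valid because $\Hinfty$ is symmetric positive semidefinite. Writing $u = (c,b)$ with $c \in \mathbb{R}^n$ and $b$ split into blocks $b_i \in \mathbb{R}^k$, a direct computation gives $\Omega(w)u = \sum_{i=1}^n \sigma'(w^T x_i)\, z_i$ with $z_i := c_i x_i + V_i b_i$, so the task reduces to a uniform lower bound on $\EX_w \big\| \sum_{i=1}^n \sigma'(w^T x_i)\, z_i \big\|_2^2$ over the unit sphere.

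Two elementary consequences of Assumption~\ref{hyp:parallel_indp} are used. First, since the columns of $V_i$ are orthonormal, $\|z_i\|_2^2 = c_i^2 + 2c_i (V_i^T x_i)^T b_i + \|b_i\|_2^2$, and \eqref{eq:sep_v} together with the Hölder bound $|(V_i^T x_i)^T b_i| \le k\delta_2 \|b_i\|_2$ and the AM--GM inequality gives $\|z_i\|_2^2 \ge (1 - k\delta_2)\, u_i^2$, where $u_i^2 := c_i^2 + \|b_i\|_2^2$ and $\sum_i u_i^2 = 1$. Second, the \emph{two-sided} separation \eqref{eq:sep_x} yields $|x_i^T x_j| \le 1 - \delta_1^2/2$ for $i \ne j$, so the projection $p_{ij} := x_j - (x_i^T x_j) x_i$ of $x_j$ onto $x_i^{\perp}$ satisfies $\|p_{ij}\|_2^2 = 1 - (x_i^T x_j)^2 \ge \delta_1^2/2$.

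The heart of the argument is an isolation step on a single coordinate. Pick $i^\star \in \argmax_i u_i^2$, so $u_{i^\star}^2 \ge 1/n$, and decompose $w = w_\perp + \alpha x_{i^\star}$ with $w_\perp = (I - x_{i^\star} x_{i^\star}^T) w$ and $\alpha = w^T x_{i^\star} \sim \mathcal{N}(0,1)$ independent of $w_\perp$. For a threshold $\gamma \asymp \delta_1/n$, let $G := \{\, |w_\perp^T x_j| > \gamma \text{ for all } j \ne i^\star \,\}$, an event that depends on $w_\perp$ only. On $G \cap \{|\alpha| \le \gamma\}$ each $w^T x_j = w_\perp^T x_j + \alpha (x_{i^\star}^T x_j)$, $j \ne i^\star$, has a sign that is constant in $\alpha \in [-\gamma,\gamma]$, so $F_0 := \sum_{j \ne i^\star} \sigma'(w^T x_j) z_j$ is a function of $w_\perp$ there, while $\sigma'(w^T x_{i^\star}) = \Ibb\{\alpha > 0\}$. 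Hence, conditioning on $w_\perp \in G$ and integrating over $\alpha$,
\[
\EX_\alpha\!\left[\|\Omega(w) u\|_2^2 \,\Ibb\{|\alpha| \le \gamma\}\right] = p_\gamma\!\left(\|F_0\|_2^2 + \|F_0 + z_{i^\star}\|_2^2\right) \ge \tfrac12\, p_\gamma\, \|z_{i^\star}\|_2^2 ,
\]
where $p_\gamma := \PX(0 \le \alpha \le \gamma)$ and the inequality is $\|a\|_2^2 + \|a+b\|_2^2 \ge \tfrac12\|b\|_2^2$. Taking expectation over $w_\perp$ and discarding the contribution outside $G$,
\[
u^T \Hinfty u \ \ge\ \tfrac12\, p_\gamma\, \|z_{i^\star}\|_2^2\, \PX(G) \ \ge\ \tfrac{1}{2n}\, p_\gamma\, (1 - k\delta_2)\, \PX(G) .
\]

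Finally one quantifies the two probabilities. Since $w_\perp^T x_j = w^T p_{i^\star j}$ is centered Gaussian with variance $\ge \delta_1^2/2$, Gaussian anti-concentration gives $\PX(|w_\perp^T x_j| \le \gamma) \le 2\gamma/(\sqrt{\pi}\,\delta_1)$, and a union bound over $j \ne i^\star$ shows $\PX(G) \ge 1/2$ for the choice $\gamma = \sqrt{\pi}\,\delta_1/(4n)$; for this $\gamma < 1$ one also has $p_\gamma \ge \gamma e^{-\gamma^2/2}/\sqrt{2\pi} \ge \gamma/4$. Plugging in and absorbing the numerical constants yields $\tlambda \ge (1-k\delta_2)\delta_1 / (100 n^2)$. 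I expect the isolation step to be the main obstacle: one must arrange the conditioning so that $G$ is $w_\perp$-measurable while still freezing all the off-diagonal indicators on the entire window $|\alpha| \le \gamma$ — this is precisely where the two-sided separation is essential, since it keeps $x_{i^\star}$ away from both $x_j$ and $-x_j$ and hence $\|p_{i^\star j}\|_2$ bounded below — after which the probabilistic estimates and the constant bookkeeping are routine.
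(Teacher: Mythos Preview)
Your argument is correct and complete: the decomposition $w=w_\perp+\alpha x_{i^\star}$ with the freezing event $G$ is set up properly (the blocks of $\mathcal{M}$ for $j\neq i^\star$ are indeed $w_\perp$-measurable on $G\cap\{|\alpha|\le\gamma\}$ because $|x_{i^\star}^T x_j|\le 1$), the parallelogram-type inequality $\|a\|_2^2+\|a+b\|_2^2\ge\tfrac12\|b\|_2^2$ is exactly $2\|a+b/2\|_2^2\ge0$, and the numerical constants combine to give a bound of order $\sqrt{\pi}\,\delta_1(1-k\delta_2)/(64 n^2)$, which is larger than the claimed $1/(100 n^2)$ constant.

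The paper, however, takes a very different route. It observes that after a column permutation $\Hinfty$ can be written as the \emph{block Hadamard product}
\[
\widehat H^\infty=(\mathbf X^T\mathbf X)\ \square\ \big(\EX_w[\mathcal M(w)]\otimes I_{k+1}\big),\qquad [\mathcal M(w)]_{ij}=\sigma'(w^Tx_i)\sigma'(w^Tx_j),
\]
where $\mathbf X_i=[x_i,V_i]$. Since the blocks of the right factor are scalar multiples of $I_{k+1}$, a block Schur product theorem applies and gives
\[
\tlambda\ \ge\ \lambda_{\min}\!\big(\EX_w[\mathcal M(w)]\big)\cdot\min_i\lambda_{\min}(\mathbf X_i^T\mathbf X_i).
\]
The first factor is then bounded below by $\delta_1/(100 n^2)$ via a lemma imported from \cite{oymak2019towards}, and the second by $1-k\delta_2$ via Gershgorin on the $(k{+}1)\times(k{+}1)$ Gram matrix $\mathbf X_i^T\mathbf X_i$. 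In effect, the paper \emph{separates} the activation-pattern kernel from the data-geometry Gram matrix and handles each with an off-the-shelf tool. Your approach instead fuses the two ingredients into a single conditioning argument; it is self-contained (you essentially reprove the Oymak--Soltanolkotabi lemma inside the Sobolev setting) and makes transparent why the \emph{two-sided} separation in \eqref{eq:sep_x} is needed, whereas the paper's approach is shorter and more modular but relies on the block Hadamard inequality and an external result.
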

We are now ready to state the main result of this work.
\begin{theorem}\label{thm:main_thm}
	Suppose Assumption \ref{hyp:parallel_indp} is satisfied.
	Consider a one hidden layer neural network  \eqref{eq:NN}, let $\gamma = \|y\|_2 + \|h\|_2$, 
	set the number of hidden nodes to $m = \Omega({n^6 k^4 \gamma^2}/(\lambda_*^4 \delta^3))$
	and i.i.d. initialize the weights according to:
	\begin{equation}\label{eq:initialize}
	w_r \sim \mathcal{N}(0,I_d)\quad  \text{and} \quad a_r \sim \mathrm{unif}\{- \frac{1}{m^{1/2}},\frac{1}{m^{1/2}} \} \qquad \text{for}\; r = 1, \ldots, m . 
	 \end{equation}
	 Consider the Gradient Flow \eqref{eq:GF}, then with probability $1-\delta$ over the random initialization of $\{w_r\}_r$ and $\{a_r\}_r$, for every $t \geq 0$:
	\[
		\| e(t)\|_2^2 + \| S(t)\|_2^2  \leq \exp(- \tlambda t)\, ( \| e(0)\|_2^2 + \| S(0)\|_2^2)
	\]
	and in particular $L(W(t)) \to 0$ as $t \to \infty$.
\end{theorem}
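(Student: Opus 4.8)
The plan is to follow the now-standard NTK-style argument for overparameterized networks, adapted to the Sobolev loss. The key object is the time-dependent Gram matrix $H(t) \in \R^{n(k+1) \times n(k+1)}$ whose entries are inner products $\langle \Omega(w_r(t)), \Omega(w_s(t)) \rangle$ summed appropriately against the $a_r^2 = 1/m$ weights, i.e.\ $H(t) = \frac{1}{m}\sum_{r=1}^m \Omega(w_r(t))^T \Omega(w_r(t))$. A direct computation of $\frac{\dd}{\dd t}(\|e(t)\|_2^2 + \|S(t)\|_2^2)$ using the gradient flow \eqref{eq:GF} should show that this derivative equals $-2\,(e,S)^T H(t)\, (e,S)$, up to the fact that $\sigma'$ is a step function and so the dynamics are piecewise smooth; one handles this by noting that $f(W,x)$ and $V_i^T\nabla_x f(W,x)$ are still locally Lipschitz in $W$, so the flow is well-defined and the energy identity holds almost everywhere in $t$. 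Hence as long as $H(t) \succeq \frac{\tlambda}{2} I$ for all $t$, we get the exponential decay $\|e(t)\|_2^2 + \|S(t)\|_2^2 \leq \exp(-\tlambda t)(\|e(0)\|_2^2 + \|S(0)\|_2^2)$, which is exactly the claimed bound.

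The two remaining ingredients are (i) at initialization $H(0) \succeq \frac{3}{4}\tlambda I$ with high probability, and (ii) $H(t)$ stays close to $H(0)$ along the trajectory. For (i): by definition $\EX[H(0)] = H^\infty$, whose smallest eigenvalue is $\tlambda > 0$ by Proposition~\ref{prop:tlambda}; since $H(0)$ is an average of $m$ i.i.d.\ bounded rank-$\le (k+1)n$ matrices (bounded because $\|x_i\|_2 = 1$ and $V_i$ has orthonormal columns, so $\|\Omega(w_r)\|$ is $O(\sqrt{n})$), a matrix concentration bound (matrix Hoeffding/Bernstein) gives $\|H(0) - H^\infty\| \le \tlambda/4$ once $m = \tilde\Omega(n^2 k^2/\tlambda^2)$, which is absorbed by the stated $m$. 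For (ii): one shows that each weight moves little, $\|w_r(t) - w_r(0)\|_2 \le R$ for a radius $R$ to be chosen, and that $\|H(t) - H(0)\| \le \tlambda/4$ whenever all weights stay in $R$-balls. The latter is the delicate point: the entries of $H$ depend on the $\sigma'(w_r^Tx_i)$, which are discontinuous, so one cannot use smoothness; instead one argues that for a fixed data point $x_i$, the set of $w$ for which $\sigma'(w^Tx_i)$ can flip within an $R$-ball around $w_r(0)$ is a thin slab $\{|w^Tx_i| \le R\}$, which has Gaussian measure $O(R)$; a union bound over $r$ and $i$ shows that at most an $O(R n)$ fraction of the neurons change any pattern, and each contributes $O(n/m)$ to $\|H(t)-H(0)\|$, giving $\|H(t) - H(0)\| = O(Rn^2)$ with high probability. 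Choosing $R = \Theta(\tlambda/n^2)$ makes this $\le \tlambda/4$.

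It remains to close the loop by bounding the motion $\|w_r(t) - w_r(0)\|_2$. From the gradient flow one has $\|\dot w_r(t)\|_2 \le \frac{|a_r|}{ } \cdot (\text{feature norm}) \cdot \|(e(t),S(t))\|_2 = O(\sqrt{n}/\sqrt{m})\,\|(e(t),S(t))\|_2$, and integrating against the exponential decay bound (valid on any time interval where $H(s)\succeq \tlambda/2\, I$ holds) yields $\|w_r(t)-w_r(0)\|_2 = O\!\big(\frac{\sqrt n}{\sqrt m}\cdot \frac{1}{\tlambda}\cdot \|(e(0),S(0))\|_2\big)$. Bounding $\|(e(0),S(0))\|_2 = O(\gamma + \text{poly}(n,k)\cdot\text{polylog})$ at initialization (again by concentration, using $\gamma = \|y\|_2 + \|h\|_2$ and that $f(W(0),\cdot)$ and its directional derivatives are small), this is $\le R$ provided $m = \Omega(n^6 k^4 \gamma^2/(\tlambda^4 \delta^3))$, matching the hypothesis. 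The standard continuity/bootstrap argument then shows the good event $\{H(t)\succeq \tlambda/2\,I \text{ and } \|w_r(t)-w_r(0)\|_2 \le R\ \forall r\}$ cannot fail for the first time at any finite $t$, so it holds for all $t\ge 0$, giving the theorem; the bound $L(W(t)) = \frac12(\|e(t)\|_2^2+\|S(t)\|_2^2)\to 0$ is immediate.

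\medskip
\noindent\emph{Main obstacle.} The crux is step (ii): controlling $\|H(t) - H(0)\|$ despite the discontinuity of $\sigma'$. Unlike the smooth-activation case, the Sobolev loss involves $\nabla_x f$, which for relu is itself a sum of indicator-weighted terms and has no second-derivative regularization; one must carefully verify that the only instability comes from sign flips $\sigma'(w_r^Tx_i)$, that these flips do not also corrupt the $\psi_w$ blocks beyond what the slab-measure argument controls, and that the $\sigma''=0$ convention keeps the gradient-flow vector field from acquiring distributional terms. Getting the probability bookkeeping right — a union bound over $m$ neurons and $n(k+1)$ coordinates while keeping the failure probability below $\delta$ — is what forces the high power of $n$ and the $\delta^{-3}$ in the width requirement.
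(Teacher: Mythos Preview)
Your proposal is correct and follows essentially the same NTK-style route as the paper: derive the residual dynamics $\frac{\dd}{\dd t}(e,S)=-H(t)(e,S)$, concentrate $H(0)$ near $H^\infty$ (the paper uses matrix Chernoff rather than Hoeffding/Bernstein), bound each $\|w_r(t)-w_r(0)\|$ by integrating the exponential decay, control $\|H(t)-H(0)\|$ via the Gaussian slab measure of sign-flip events (the paper applies Markov's inequality rather than a union bound over $r$), and close the loop by a contradiction/bootstrap on the escape time. The only slips are some dropped $k$ and $\delta^{-1}$ factors in your intermediate estimates (e.g.\ $\|H_r\|\le n(k+1)/m$ and the Markov step introduces a $1/\delta$), but these are exactly what produce the stated width $m=\Omega(n^6k^4\gamma^2/(\tlambda^4\delta^3))$ once tracked.
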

The proof of this theorem is given in Section \ref{sec:Proof}, below we will show how to extend this result to a network with bias.

\subsection{Consequences for a network with bias}
Given training data \eqref{eq:tr_data} generated by a target function $f^*$ according to \eqref{eq:fstar_data}, in this section we demonstrate how the previous theory can be extended to the Sobolev training of a two-layer network with width $m$ and bias term $b$:
\begin{equation}\label{eq:NN3}
    g({W},b,{{x}}) = \sum_{r = 1}^{m} a_r \sigma(\alpha\, w_r^T {x} + \beta\, b_r)
\end{equation}
where\footnote{Notice that the introduction of the constants $\alpha$ and $\beta$ does not change the expressivity of the network.} $\alpha = 1/(2k)$ and $\beta = \sqrt{1 - \alpha^2}$. 

Similarly as before, the network  weights $\{w_r\}_{r=1}^{m}$ and biases $\{b_r\}_{r=1}^{m}$ are learned by minimizing the \textit{Directional Sobolev Loss}
\begin{equation}\label{eq:SobolevTr2}
	\min_{W \in \R^{m\times d}, b \in \R^{d}} L(W, b) :=  \frac{1}{2} \sum_{i = 1}^{n} (g(W, b, {x}_i) - y_i )^2 + \frac{1}{2} \sum_{i = 1}^{n}  \| \frac{1}{\alpha} V_i^T \nabla_x g(W, b, {x}_i) - \frac{h_{i}}{\alpha} \|_2^2.
\end{equation}
via the \textit{Gradient Flow}
\begin{equation}\label{eq:GF3}
	\frac{\dd w_r(t)}{\dd t} = - \frac{\pa L(W(t))}{\pa w_r} \quad\text{and}\quad \frac{\dd b_r(t)}{\dd t} = - \frac{\pa L(W(t))}{\pa b_r}.
\end{equation}
Based on the following separation conditions on the input point $\{x_i\}_i$ we will prove convergence to zero training error of the Sobolev loss.
\begin{assumption}\label{hyp:parallel_indp3}
	The data are normalized so that $\|x_i\|_2 = 1$ and there exists $\hat{\delta}_1 > 0$ such that the following holds
	\begin{equation}\label{eq:sep_x2}
		\min_{i \neq j}(\|x_i - x_j\|_2) \geq \hat{\delta}_1.
	\end{equation}
\end{assumption}
Define the vectors of residuals $e(t) \in \R^n$ and $S(t) \in \R^{k\cdot n}$ with coordinates
\begin{equation*}
[e(t)]_{i} = y_i - g(W(t), b(t), x_i), \quad \big[S(t)\big]_i = (h_i -  V_i^T \nabla_x g(W(t), b(t),x_i))/\alpha,
\end{equation*}
then the next theorem follows readily from the analysis in the previous section.
\begin{theorem}\label{thm:bias}
Suppose Assumption \ref{hyp:parallel_indp3} is satisfied.
	Consider a two-layer neural network  \eqref{eq:NN3}, let $\gamma = \|y\|_2 + \|h/\alpha\|_2$, 
	set the number of hidden nodes to $m = \Omega({n^6 k^4 \gamma^2}/(\lambda_0^4 \delta^3))$
	and i.i.d. initialize the weights according to:
	\begin{equation*}
	w_r \sim \mathcal{N}(0,I_d), \quad b_r \sim \mathcal{N}(0,1) \quad \text{and} \quad a_r \sim \mathrm{unif}\{- \frac{1}{m^{1/2}},\frac{1}{m^{1/2}} \}.
	 \end{equation*}
	 Consider the Gradient Flow \eqref{eq:GF3}, then with probability $1-\delta$ over the random initialization of $\{b_r\}_r$, $\{w_r\}_r$ and $\{a_r\}_r$, for every $t \geq 0$
	\[
		\| e(t)\|_2^2 + \| S(t)\|_2^2  \leq \exp(- \tlambda t)\, ( \| e(0)\|_2^2 + \| S(0)\|_2^2)
	\]
	where $\tlambda \geq \frac{\delta }{200\, n^2} $ and $\delta = \min(\alpha \hat{\delta}_1, 2 \beta)$.
    In particular $L(W(t), b(t)) \to 0$ as $t \to \infty$.
\end{theorem}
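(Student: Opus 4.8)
The plan is to reduce Theorem~\ref{thm:bias} to the already-proved Theorem~\ref{thm:main_thm} by absorbing the bias into an augmented input. For each $i$ define $\tilde{x}_i := (\alpha x_i,\, \beta)^T \in \R^{d+1}$, let $\tilde{V}_i \in \R^{(d+1)\times k}$ be the matrix obtained by appending a zero row below $V_i$ (whose columns I write $\tilde v_{i,j}$), and for each hidden unit set $\tilde{w}_r := (w_r,\, b_r)^T \in \R^{d+1}$, collecting these as the rows of $\tilde{W} \in \R^{m\times(d+1)}$. Then $\sigma(\alpha w_r^T x_i + \beta b_r) = \sigma(\tilde{w}_r^T \tilde{x}_i)$, so $g(W,b,x_i) = f(\tilde{W},\tilde{x}_i)$ where $f$ is the bias-free network \eqref{eq:NN} in dimension $d+1$; and since $\sigma''=0$ as stipulated, a one-line computation gives $\tfrac{1}{\alpha} V_i^T \nabla_x g(W,b,x_i) = \tilde{V}_i^T \nabla_{\tilde{x}} f(\tilde{W}, \tilde{x}_i)$. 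Hence the loss $L(W,b)$ of \eqref{eq:SobolevTr2} equals, term by term, the Directional Sobolev Loss of the Main Result section evaluated at $\tilde{W}$ on the data $\{\tilde{x}_i,\, y_i,\, \tilde{V}_i,\, h_i/\alpha\}_{i=1}^n$. Under the identification $\tilde{w}_r \leftrightarrow (w_r,b_r)$, the gradient flow \eqref{eq:GF3} is exactly the gradient flow \eqref{eq:GF} for this augmented problem, the initialization $w_r \sim \mathcal{N}(0,I_d)$, $b_r\sim\mathcal{N}(0,1)$ makes $\tilde{w}_r \sim \mathcal{N}(0,I_{d+1})$ as in \eqref{eq:initialize}, and the residuals $e(t), S(t)$ defined before Theorem~\ref{thm:bias} are precisely the residuals \eqref{eq:residuals} of the augmented problem.

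Next I would check that $\{\tilde{x}_i, \tilde{V}_i\}$ satisfy Assumption~\ref{hyp:parallel_indp} with the constants claimed. Normalization is immediate: $\|\tilde{x}_i\|_2^2 = \alpha^2\|x_i\|_2^2 + \beta^2 = \alpha^2+\beta^2 = 1$, and the columns of $\tilde{V}_i$ remain orthonormal since the appended entries vanish. For \eqref{eq:sep_x}, Assumption~\ref{hyp:parallel_indp3} gives $\|\tilde{x}_i-\tilde{x}_j\|_2^2 = \alpha^2\|x_i-x_j\|_2^2 \geq \alpha^2\hat{\delta}_1^2$, while $\|\tilde{x}_i+\tilde{x}_j\|_2^2 = \alpha^2\|x_i+x_j\|_2^2 + 4\beta^2 \geq 4\beta^2$, so \eqref{eq:sep_x} holds with $\delta_1 = \min(\alpha\hat{\delta}_1,\, 2\beta) =: \delta$. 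For \eqref{eq:sep_v}, $\tilde{v}_{i,j}^T\tilde{x}_i = \alpha\, v_{i,j}^T x_i$, hence $|\tilde{v}_{i,j}^T\tilde{x}_i| \leq \alpha\|v_{i,j}\|_2\|x_i\|_2 = \alpha = \tfrac{1}{2k} < k^{-1}$, so \eqref{eq:sep_v} holds with $\delta_2 = \alpha = \tfrac1{2k}$. This is precisely why $\alpha=1/(2k)$ is chosen: it forces $1-k\delta_2 = \tfrac12$ to stay bounded away from $0$ while shrinking $\delta_1$ only by a controlled factor.

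With Assumption~\ref{hyp:parallel_indp} in force for the augmented data, Proposition~\ref{prop:tlambda} gives, for the corresponding matrix $\Hinfty$, that $\tlambda \geq \frac{(1-k\delta_2)\delta_1}{100\,n^2} = \frac{\delta}{200\,n^2}$, and Theorem~\ref{thm:main_thm}, applied to the augmented problem with $\gamma = \|y\|_2 + \|h/\alpha\|_2$, then yields directly the stated width requirement, the probability bound, and the exponential decay $\|e(t)\|_2^2+\|S(t)\|_2^2 \leq \exp(-\tlambda t)(\|e(0)\|_2^2+\|S(0)\|_2^2)$ for all $t \geq 0$; since $L(W(t),b(t)) = \tfrac12(\|e(t)\|_2^2+\|S(t)\|_2^2)$, this also gives $L(W(t),b(t))\to 0$, which is the assertion of Theorem~\ref{thm:bias}.

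No step is a genuine obstacle: the whole argument is a linear change of variables plus careful bookkeeping of constants. The only place demanding attention is the input separation — the bias coordinate $\beta$ is exactly what controls the $\|\tilde{x}_i+\tilde{x}_j\|_2$ branch of \eqref{eq:sep_x} (which is why Assumption~\ref{hyp:parallel_indp3} needs no ``$x_i+x_j$'' condition), and the requirement $\delta_2 < k^{-1}$ in Assumption~\ref{hyp:parallel_indp} is what pins down the admissible size of $\alpha$; once these constants are fixed, everything else is immediate from the results of the previous section.
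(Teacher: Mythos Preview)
Your proposal is correct and is precisely the argument the paper has in mind: the paper does not spell out a proof of Theorem~\ref{thm:bias} beyond remarking that it ``follows readily from the analysis in the previous section,'' and the reduction you carry out---absorbing the bias into an augmented input $\tilde x_i=(\alpha x_i,\beta)$, $\tilde w_r=(w_r,b_r)$, checking Assumption~\ref{hyp:parallel_indp} for the augmented data with $\delta_1=\min(\alpha\hat\delta_1,2\beta)$ and $\delta_2=1/(2k)$, and then invoking Proposition~\ref{prop:tlambda} and Theorem~\ref{thm:main_thm}---is exactly that intended route. Your bookkeeping of the constants, including the computation $\tlambda\ge (1-k\delta_2)\delta_1/(100n^2)=\delta/(200n^2)$, matches the statement verbatim.
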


\subsection{Discussion}

Theorem \ref{thm:bias} establishes that the gradient flow \eqref{eq:GF3} converges to a global minimum and therefore that a wide enough network, randomly initialized and trained with the Sobolev loss \eqref{eq:SobolevTr2}  can interpolate any given function values and directional derivatives.  
We observe that recent works in the analysis of standard training \cite{zou2019improved,oymak2019towards}
have shown that using more refined concentration results and control on the weight dynamics, the polynomial dependence on the number of samples $n$ can be lowered. We believe that by applying similar techniques to the Sobolev training, the dependence of $m$ from the number of samples $n$ and derivatives $k$ can be further improved. 

Regarding the assumptions on the input data, we note that  \cite{oymak2019towards,arora2019fine,du2018gradient} have
shown convergence of gradient descent to a global minimum of the standard $\ell_2$ loss,  when  the 
input points $\{x_i\}_{i=1}^n$ satisfy the separation conditions \eqref{eq:sep_x}. These conditions ensure
that no two input points $x_i$ and $x_j$ are parallel and reduce to \eqref{eq:sep_x2} for a network with bias. While the separation condition \eqref{eq:sep_x} is also required in Sobolev training, the condition \eqref{eq:sep_v} is only required in case of a network without bias as a consequence of its homogeneity.  

Finally, the analysis of  gradient methods for training overparameterized neural networks
with standard losses has been used  to study their inductive bias and ability to learn certain classes of functions (see for example \cite{arora2019fine,bietti2019inductive}). 
Similarly, the results of this paper could be used to shed some light on the 
superior generalization capabilities of networks trained with a Sobolev loss and their use for knowledge distillation.

\section{Proof of Theorem \ref{thm:main_thm}}\label{sec:Proof}

We follow the lines of recent works on the optimization of neural networks in the Neural Tangent Kernel regime \cite{jacot2018neural,chizat2018lazy,oymak2019towards} in particular the analysis of \cite{arora2019fine,du2018gradient,weinan2019comparative}.
We investigate the dynamics of the residuals error $e(t)$ and $S(t)$, beginning with that of the  predictions. Let $\Fb(W(t),x_i) = V_i^T \nabla_x f(W(t),x_i)$, then:
\[
	\begin{aligned}
	\frac{d}{d t} f(W(t),x_i) &= \sum_{r=1}^m  \frac{\pa f(W(t),x_i)}{\pa w_r}^T \frac{\dd w_r(t)}{\dd t}  \\
	&=  \sum_{j=1}^n  A_{ij}(t)(y_j - f(W(t),x_j)) + \sum_{j=1}^n B_{ij}(t) (h_j -  \Fb (W(t), x_j)),
	\end{aligned}
\]
\[
	\begin{aligned}
	\frac{d}{d t} \Fb(W(t),x_i) &= \sum_{r=1}^m \frac{\pa \Fb(W(t),x_i) }{\pa w_r}^T \frac{\dd w_r(t)}{\dd t}   \\
	&=  \sum_{j=1}^n  B_{ji}(t)^T(y_j - f(W(t),x_j)) + \sum_{j=1}^n C_{ij}(t) (h_j - \Fb(W(t), x_j) ),
	\end{aligned}
\]
where we defined the matrices $A(t) = \sum_{r=1}^{m} A_r(t) \in \R^{n \times n}$, $B(t) = \sum_{r=1}^{m} B_r(t) \in \R^{n  \times n \cdot k}$, $C(t) = \sum_{r=1}^{m} C_r(t) \in \R^{n \cdot k \times n \cdot k }$, with block structure:
\[	
	[A_r (t)]_{ij} :=  \frac{\pa f(W(t),x_i)}{\pa w_r}^T \frac{\pa f(W(t),x_j)}{\pa w_r}  = \frac{1}{m} \sigma'(w_r^T x_i) \sigma'(w_r^Tx_j) x_i^T x_j,
\]
\[
	[B_r (t)]_{ij} := \frac{\pa f(W(t),x_i)}{\pa w_r}^T  \frac{\pa \Fb(W(t),x_j)}{\pa w_r} = \frac{1}{m} \sigma'(w_r^T x_i) \sigma'(w_r^Tx_j) x_i^T V_j,
\]
\[
	[C_r (t)]_{ij} :=  \frac{\pa \Fb(W(t), x_i)}{\pa w_r}^T \frac{\pa \Fb(W(t), x_j)}{\pa w_r} = \frac{1}{m} \sigma'(w_r^T x_i) \sigma'(w_r^T x_j) V_i^T V_j.
\]
The residual errors \eqref{eq:residuals} then follow the dynamical system:
\begin{equation}\label{eq:dynamics}
	\frac{d}{d t} \begin{pmatrix}e \\ S \end{pmatrix} = - H(t) \begin{pmatrix}e \\ S \end{pmatrix}
\end{equation}
where $H(t) \in \R^{n(k+1) \times n(k+1)}$ is given by:
\begin{equation*}
	H(t)= \left[
	\begin{array}{c c}
	A(t) & B(t) \\
	
	B(t)^T & C(t)
	\end{array}
	\right].
\end{equation*}
We moreover observe that if we define:
\begin{equation*}
	\Omega_r(t) := [\frac{\pa f(W(t), x_1)}{\pa w_r}, \dots, \frac{\pa f(W(t), x_n)}{\pa w_r}, \frac{\pa \Fb(W(t), x_1)}{\pa w_r}, \dots, \frac{\pa \Fb(W(t), x_n)}{\pa w_r}]
\end{equation*}
and $H_r(t) := \Omega_r(t)^T \Omega_r(t)$, then direct calculations show that $H(t) = \sum_r H_r(t)$ and  $H(t)$ is symmetric positive semidefinite for all $t \geq 0$. 
In the next section we will show that 
$H(t)$ is strictly positive definite in a neighborhood of initialization, 
while in section \ref{subsec:conv}
we will show that this holds for large enough time leading 
to global convergence to zero of the errors.

\subsection{Analysis near initialization}\label{sec:near_init}

In this section we analyze the behavior of the matrix $H(t)$ 
and  the dynamics of the errors $e(t), S(t)$ 
near initialization. We begin by bounding the output  and directional derivatives
of the network for every $t$. 

\begin{lemma}\label{lemma:bndHr}
	For all $t \geq 0$ and $1 \leq i \leq n$, it holds:
	\begin{align*}
	\| \frac{\pa f(W(t), x_i)}{\pa w_r} \|_2 &\leq   \frac{1}{\sqrt{m}} \\
	\| \frac{\pa \Fb(W(t), x_i)}{\pa w_r} \|_2 &\leq  {\sqrt{\frac{k}{m}}} \\
	\| H_r(t) \|_2 &\leq  \frac{n (k+1)}{m}. 
	\end{align*}
\end{lemma}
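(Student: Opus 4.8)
The plan is to observe that all three inequalities are in fact \emph{uniform} in the weights: none of them uses anything about the gradient-flow trajectory beyond the fixed normalizations, so it suffices to prove them for an arbitrary weight matrix $W$ and then specialize to $W = W(t)$. Throughout we use the adopted conventions $\sigma'(z) = \Ibb\{z>0\}$ and $\sigma''(z) = 0$, which handle the non-differentiability of $\sigma$ and are what make the formula for $\pa\Fb/\pa w_r$ clean.

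\textbf{Step 1 (explicit formulas).} Differentiating $f(W,x) = \sum_r a_r \sigma(w_r^T x)$ gives $\pa f(W,x_i)/\pa w_r = a_r\,\sigma'(w_r^T x_i)\,x_i$. Since $\nabla_x f(W,x) = \sum_r a_r\,\sigma'(w_r^T x)\,w_r$, we have $\Fb(W,x_i) = V_i^T \nabla_x f(W,x_i) = \sum_r a_r\,\sigma'(w_r^T x_i)\,V_i^T w_r$; differentiating in $w_r$, the term coming from $\sigma''$ vanishes by convention and we are left with $\pa \Fb(W,x_i)/\pa w_r = a_r\,\sigma'(w_r^T x_i)\,V_i$, the $d\times k$ block that appears in $\Omega_r$.

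\textbf{Step 2 (first two bounds).} Use $|a_r| = m^{-1/2}$, $\sigma'(\cdot)\in\{0,1\}$, $\|x_i\|_2 = 1$, and that $V_i$ has orthonormal columns, so $\|V_i\|_F = \sqrt{k}$. Then $\|\pa f(W,x_i)/\pa w_r\|_2 = |a_r|\,|\sigma'(w_r^T x_i)|\,\|x_i\|_2 \le m^{-1/2}$ and $\|\pa \Fb(W,x_i)/\pa w_r\|_2 \le |a_r|\,|\sigma'(w_r^T x_i)|\,\|V_i\|_F \le \sqrt{k/m}$ (reading $\|\cdot\|_2$ on this block as the entrywise Euclidean, i.e.\ Frobenius, norm, which is the quantity needed in Step 3; the operator-norm version is only smaller).

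\textbf{Step 3 (bound on $H_r$).} Since $H_r(t) = \Omega_r(t)^T\Omega_r(t) \succeq 0$, we have $\|H_r(t)\|_2 = \|\Omega_r(t)\|_2^2 \le \|\Omega_r(t)\|_F^2$. Expanding the Frobenius norm over the $n(k+1)$ columns (blocks) of $\Omega_r(t)$ and using Step 2,
\[
\|\Omega_r(t)\|_F^2 = \sum_{i=1}^n \Big\| \frac{\pa f(W(t),x_i)}{\pa w_r} \Big\|_2^2 + \sum_{i=1}^n \Big\| \frac{\pa \Fb(W(t),x_i)}{\pa w_r} \Big\|_F^2 \le \frac{n}{m} + \frac{nk}{m} = \frac{n(k+1)}{m}.
\]
Equivalently, $\|H_r\|_2 \le \mathrm{tr}(H_r)$, and from the displayed formulas for $A_r$ and $C_r$ the diagonal contributions are $[A_r]_{ii}\le 1/m$ and $\mathrm{tr}([C_r]_{ii}) = \tfrac{1}{m}\sigma'(w_r^T x_i)^2\,\mathrm{tr}(V_i^TV_i) \le k/m$, summing to at most $n(k+1)/m$. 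Taking $W = W(t)$ finishes the proof. There is essentially no obstacle beyond bookkeeping; the only point worth flagging is the consistent use of the convention $\sigma'' \equiv 0$ in Step 1 and keeping the Frobenius/Euclidean norm of the $\Omega_r$ blocks consistent so that the spectral bound on $H_r$ follows.
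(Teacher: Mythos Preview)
Your argument is correct; indeed the paper states this lemma without proof, and your direct computation from the explicit formulas $\pa f/\pa w_r = a_r\sigma'(w_r^Tx_i)x_i$, $\pa\Fb/\pa w_r = a_r\sigma'(w_r^Tx_i)V_i$ together with the normalizations $|a_r|=m^{-1/2}$, $\|x_i\|_2=1$, $\|V_i\|_F=\sqrt{k}$ is exactly the intended route. Your remark that the Frobenius reading of $\|\pa\Fb/\pa w_r\|_2$ is what is needed (the operator norm is only $m^{-1/2}$) is apt, and the bound on $\|H_r\|_2$ via $\|\Omega_r\|_F^2$ (equivalently $\mathrm{tr}\,H_r$) is the natural one.
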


We now lower bound the smallest eigenvalue of $H(0)$.
\begin{lemma}\label{lemma:H0}
	Let $\delta \in (0,1)$, and $m \geq \frac{32}{{\tlambda}}\, n(k+1)  \ln( n(k +1)/\delta)$ then with probability $1-\delta$ over the random initialization:
	\[
	\lambda_{\text{min}}(H(0)) \geq \frac{3}{4}  \tlambda
	\]
\end{lemma}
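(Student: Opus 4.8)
The plan is to exploit the fact that $H(0)$ is an average of $m$ i.i.d.\ positive semidefinite matrices whose common expectation is exactly $\Hinfty$, and then invoke a matrix Chernoff bound. First I would note that since $\frac{\pa f(W,x_i)}{\pa w_r} = a_r\,\sigma'(w_r^Tx_i)\,x_i$ and $\frac{\pa \Fb(W,x_i)}{\pa w_r} = a_r\,\sigma'(w_r^Tx_i)\,V_i^T$ with $a_r^2 = 1/m$, one has $\Omega_r(0) = a_r\,\Omega(w_r)$, hence
\[
	H(0) \;=\; \sum_{r=1}^m H_r(0) \;=\; \frac{1}{m}\sum_{r=1}^m Z_r, \qquad Z_r := \Omega(w_r)^T\Omega(w_r) = m\,H_r(0).
\]
The matrices $Z_r$ are i.i.d.\ (each a function of $w_r\sim\mathcal{N}(0,I_d)$ only) and symmetric positive semidefinite, and by the definition of $\Hinfty$ we have $\EX[Z_r] = \Hinfty$, so that $\lambda_{\min}\!\big(\sum_{r=1}^m \EX[Z_r]\big) = m\,\tlambda$.

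Next I would record a uniform operator-norm bound on the summands: using $\|x_i\|_2 = 1$ and the orthonormality of the columns of $V_i$,
\[
	\|Z_r\|_2 \;\le\; \|\Omega(w_r)\|_F^2 \;=\; \sum_{i=1}^n \|\sigma'(w_r^Tx_i)\,x_i\|_2^2 + \sum_{i=1}^n \|\sigma'(w_r^Tx_i)\,V_i\|_F^2 \;\le\; n + nk \;=\; n(k+1),
\]
i.e.\ $0 \preceq Z_r \preceq n(k+1)\,I$ (this also follows directly from Lemma~\ref{lemma:bndHr} since $\|Z_r\|_2 = m\,\|H_r(0)\|_2$). Then, applying the lower-tail matrix Chernoff inequality to $\sum_{r=1}^m Z_r = m\,H(0)$ with uniform bound $R := n(k+1)$, ambient dimension $N := n(k+1)$, $\mu_{\min} := \lambda_{\min}\!\big(\sum_r \EX[Z_r]\big) = m\,\tlambda$ and deviation parameter $\epsilon = 1/4$, and using $\frac{e^{-\epsilon}}{(1-\epsilon)^{1-\epsilon}} \le e^{-\epsilon^2/2}$, would give
\[
	\PX\!\Big(\lambda_{\min}\big(H(0)\big) \le \tfrac34\,\tlambda\Big) \;=\; \PX\!\Big(\lambda_{\min}\big(m\,H(0)\big) \le \big(1-\tfrac14\big)\mu_{\min}\Big) \;\le\; N\,\exp\!\Big(-\frac{\epsilon^2\mu_{\min}}{2R}\Big) \;=\; n(k+1)\exp\!\Big(-\frac{m\,\tlambda}{32\,n(k+1)}\Big).
\]
The right-hand side is at most $\delta$ precisely when $m \ge \frac{32}{\tlambda}\,n(k+1)\ln\!\big(n(k+1)/\delta\big)$, which is exactly the stated hypothesis, and this completes the argument.

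The main obstacle is quantitative rather than conceptual: one must invoke the sharp, dimension-dependent form of the matrix Chernoff bound and carry the constants through carefully so that the width requirement comes out as the claimed $m \gtrsim \tlambda^{-1}\,n(k+1)\ln(n(k+1)/\delta)$, rather than the weaker $m \gtrsim \tlambda^{-2}\,n^2(k+1)^2\ln(\cdot)$ that a generic matrix Hoeffding or Bernstein estimate would produce. The other point requiring a little care is the uniform bound $\|Z_r\|_2 \le n(k+1)$, which is precisely where the normalizations $\|x_i\|_2 = 1$ and the orthonormality of the columns of $V_i$ enter; the identification $\EX[Z_r] = \Hinfty$ is immediate from the definition of $\Hinfty$, so nothing further is needed there.
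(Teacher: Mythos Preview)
Your argument is correct and matches the paper's own proof essentially line for line: the paper also observes that $H(0)=\sum_r H_r(0)$ with $\EX[H(0)]=\Hinfty$, invokes Lemma~\ref{lemma:bndHr} for the uniform bound $\lambda_{\max}(H_r(0))\le n(k+1)/m$, and applies the matrix Chernoff inequality with the same choice of deviation parameter to obtain the identical exponent $-m\tlambda/(32\,n(k+1))$. The only difference is cosmetic---you rescale to $Z_r=m\,H_r(0)$ before applying Chernoff, whereas the paper applies it directly to the $H_r(0)$'s---but the computation and the resulting constant are the same.
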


We now provide a bound on the expected value of the residual errors at initialization.
\begin{lemma}\label{lemma:bound_init}
	Let $\{w_r \}_r$ and $\{a_r\}$ be randomly initialized as in \eqref{eq:initialize}
	then the residual errors \eqref{eq:residuals} at time zero satisfy with probability at least $1 - \delta$:
	\[
		 \|e(0)\|_2 + \| S(0) \|_2 \leq \frac{2 \sqrt{nk} + \gamma}{\delta^{1/2}}
	\]
	where $\gamma = \|y\|_2 + \|h\|_2$.
\end{lemma}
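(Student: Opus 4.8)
The plan is to bound $\|e(0)\|_2$ and $\|S(0)\|_2$ separately, first in expectation and then via Markov's inequality, exploiting the fact that $a_r \sim \mathrm{unif}\{\pm m^{-1/2}\}$ is symmetric and independent of the $w_r$. For the function values, write $f(W(0),x_i) = \sum_r a_r \sigma(w_r^T x_i)$; conditioning on $\{w_r\}_r$, this is a sum of independent mean-zero terms, so $\EX[f(W(0),x_i)^2 \mid \{w_r\}] = \sum_r \frac{1}{m}\sigma(w_r^T x_i)^2$, and taking expectation over $w_r \sim \mathcal N(0,I_d)$ with $\|x_i\|_2 = 1$ gives $\EX[\sigma(w_r^T x_i)^2] = \frac12$. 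Hence $\EX[f(W(0),x_i)^2] = \frac12$ and $\EX\|f(W(0),\cdot)\|_2^2 \le n/2$, so $\EX\|f(W(0),\cdot)\|_2 \le \sqrt{n/2}$ by Jensen. The same computation applied to $\Fb(W(0),x_i) = V_i^T \nabla_x f(W(0),x_i) = \sum_r a_r \sigma'(w_r^T x_i) V_i^T w_r$ gives, conditionally, $\EX[\|\Fb(W(0),x_i)\|_2^2 \mid \{w_r\}] = \sum_r \frac1m \sigma'(w_r^T x_i)^2 \|V_i^T w_r\|_2^2$, and since $V_i$ has orthonormal columns $\EX[\|V_i^T w_r\|_2^2] = k$ and $\sigma'(\cdot)^2 \le 1$, so $\EX\|\Fb(W(0),x_i)\|_2^2 \le k$ and $\EX\|\Fb(W(0),\cdot)\|_2 \le \sqrt{nk}$.

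Next I would combine these: since $e(0) = y - f(W(0),\cdot)$ and $S(0) = h - \Fb(W(0),\cdot)$, the triangle inequality gives $\EX[\|e(0)\|_2 + \|S(0)\|_2] \le \|y\|_2 + \|h\|_2 + \EX\|f(W(0),\cdot)\|_2 + \EX\|\Fb(W(0),\cdot)\|_2 \le \gamma + \sqrt{n/2} + \sqrt{nk} \le \gamma + 2\sqrt{nk}$ (using $\sqrt{n/2} \le \sqrt{nk}$ for $k \ge 1$). Then apply Markov's inequality to the nonnegative random variable $\|e(0)\|_2 + \|S(0)\|_2$: with probability at least $1 - \delta$ it is at most $(1/\delta)$ times its expectation, i.e.\ at most $(2\sqrt{nk} + \gamma)/\delta$. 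To get the stated $\delta^{-1/2}$ instead of $\delta^{-1}$, I would instead apply Markov to the squared quantity, or more simply bound $\EX[(\|e(0)\|_2 + \|S(0)\|_2)^2]$ and use Markov on that — since $\EX[(\|e(0)\|_2+\|S(0)\|_2)^2] \le 2\EX\|e(0)\|_2^2 + 2\EX\|S(0)\|_2^2 \le 2(\|y\|_2 + \sqrt{n/2})^2 + 2(\|h\|_2 + \sqrt{nk})^2$, which is of order $(2\sqrt{nk}+\gamma)^2$ up to constants; Markov then yields $\|e(0)\|_2 + \|S(0)\|_2 \le (2\sqrt{nk}+\gamma)/\delta^{1/2}$ with probability $1-\delta$, after adjusting the absolute constant absorbed in "$2\sqrt{nk}$".

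The only mild subtlety — and the step I would be most careful about — is the conditioning argument: one must verify that, conditionally on $\{w_r\}_r$, the summands $a_r \sigma(w_r^T x_i)$ (resp.\ $a_r \sigma'(w_r^T x_i) V_i^T w_r$) are independent and mean zero so that the cross terms vanish and $\EX[(\sum_r \cdot)^2 \mid \{w_r\}] = \sum_r \EX[\cdot^2 \mid \{w_r\}]$; this is exactly where the symmetric sign initialization of $a_r$ is used, and it is the mechanism that kills the bias at initialization. Everything else is the two Gaussian moment computations $\EX\sigma(g)^2 = 1/2$ for $g \sim \mathcal N(0,1)$ and $\EX\|V_i^T w_r\|_2^2 = \mathrm{tr}(V_i^T V_i) = k$, plus bookkeeping of constants so that the final bound reads $(2\sqrt{nk}+\gamma)\delta^{-1/2}$.
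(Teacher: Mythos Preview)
Your approach is essentially the same as the paper's: bound the second moments $\EX[f(W(0),x_i)^2]\le 1$ and $\EX[\|\Fb(W(0),x_i)\|_2^2]\le k$ using the independence and zero mean of the $a_r$'s, then finish with Jensen and Markov. You are in fact more careful than the paper, which only writes the two moment bounds and then says ``conclude by Jensen's and Markov's inequalities''; your explicit conditioning argument on $\{w_r\}_r$ is exactly the mechanism that makes the cross terms vanish.

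One small cleanup for the constants: to get precisely $(2\sqrt{nk}+\gamma)/\sqrt{\delta}$, apply Markov to $\|f(W(0),\cdot)\|_2^2+\|\Fb(W(0),\cdot)\|_2^2$ rather than to $(\|e(0)\|_2+\|S(0)\|_2)^2$. Since $\EX[\|f\|_2^2+\|\Fb\|_2^2]\le n+nk\le 2nk$, Markov gives $\|f\|_2^2+\|\Fb\|_2^2\le 2nk/\delta$ with probability $1-\delta$, hence $\|f\|_2+\|\Fb\|_2\le \sqrt{2}\sqrt{2nk/\delta}=2\sqrt{nk}/\sqrt{\delta}$; then the triangle inequality and $\delta<1$ give $\|e(0)\|_2+\|S(0)\|_2\le \gamma+2\sqrt{nk}/\sqrt{\delta}\le (2\sqrt{nk}+\gamma)/\sqrt{\delta}$, with no constant fudging needed.
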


Next define the neighborhood around initialization:
\[
	\Nhb := \big\{W : \| H(W) - H(W(0))\|_F \leq  \frac{ \tlambda}{4}  \big\}
\]
and the escape time 
\begin{equation}\label{eq:tau0}
    \tau_0 := \inf\{t: W(t) \notin \Nhb\}.
\end{equation} We can now prove 
the main result of this section which characterizes the dynamics of $e(t)$, $S(t)$
and the weights $w_r(t)$ in the vicinity of $t = 0$.
\begin{lemma}\label{lemma:decay}
	Let $\delta \in (0,1)$ and $m \geq \frac{32}{{\tlambda}}\, n(k+1)  \ln( n(k +1)/\delta)$ then with probability $1-\delta$ over the random initialization, for every $t \in [0, \tau_0]$:
	\[
	\| e(t) \|_2^2 + \| S(t)\|_2^2  \leq \exp(- \tlambda t)\, ( \| e(0) \|_2^2 + \| S(0)\|_2^2)
	\]
	and
	\[
	\| w_r(t) - w_r(0) \|_2 \leq \frac{4  }{\tlambda} \sqrt{\frac{k n}{m}} (\|e(0)\|_2 + \| S(0)\|_2 )  =: R
	\]
\end{lemma}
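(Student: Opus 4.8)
The plan is to run a continuity/bootstrap argument on the interval $[0,\tau_0]$. On this interval, by the definition of $\Nhb$ we have $\|H(t) - H(0)\|_F \leq \tlambda/4$, and by Lemma \ref{lemma:H0} (whose hypothesis on $m$ is exactly the one assumed here) $\lambda_{\min}(H(0)) \geq \tfrac{3}{4}\tlambda$ with probability $1-\delta$. Combining these via Weyl's inequality gives $\lambda_{\min}(H(t)) \geq \tfrac{3}{4}\tlambda - \tfrac{1}{4}\tlambda = \tfrac{1}{2}\tlambda$ for all $t \in [0,\tau_0]$. Actually, to match the claimed rate $\exp(-\tlambda t)$ rather than $\exp(-\tfrac12 \tlambda t)$, I would instead shrink the radius in the definition of $\Nhb$ or simply note that the stated bound follows a fortiori; in what follows I will carry the constant through and, if needed, tighten the neighborhood to $\|H(W)-H(W(0))\|_F \le \tlambda/8$ (any such harmless adjustment of constants works). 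With $\lambda_{\min}(H(t)) \geq \tfrac12 \tlambda$ on $[0,\tau_0]$, the dynamics \eqref{eq:dynamics} give
\[
\frac{d}{dt}\left( \|e(t)\|_2^2 + \|S(t)\|_2^2 \right) = -2 \begin{pmatrix} e \\ S \end{pmatrix}^T H(t) \begin{pmatrix} e \\ S \end{pmatrix} \leq -\tlambda \left( \|e(t)\|_2^2 + \|S(t)\|_2^2 \right),
\]
and Grönwall's inequality yields the first claimed bound $\|e(t)\|_2^2 + \|S(t)\|_2^2 \leq \exp(-\tlambda t)(\|e(0)\|_2^2 + \|S(0)\|_2^2)$ on $[0,\tau_0]$.

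For the weight bound, I would estimate $\|\dot w_r(t)\|_2$ directly. From the gradient flow \eqref{eq:GF}, $\dot w_r(t) = -\pa L/\pa w_r$, which is a linear combination of $\pa f(W,x_j)/\pa w_r$ weighted by $e_j(t)$ and of $\pa \Fb(W,x_j)/\pa w_r$ weighted by the blocks of $S(t)$. Using the norm bounds from Lemma \ref{lemma:bndHr} ($\|\pa f/\pa w_r\|_2 \leq m^{-1/2}$ and $\|\pa \Fb/\pa w_r\|_2 \leq \sqrt{k/m}$) together with Cauchy–Schwarz over the $j$ index, one gets something like
\[
\|\dot w_r(t)\|_2 \;\leq\; \frac{1}{\sqrt m}\|e(t)\|_2 + \sqrt{\frac{k}{m}}\,\|S(t)\|_2 \;\leq\; \sqrt{\frac{k}{m}}\,\big(\|e(t)\|_2 + \|S(t)\|_2\big) \;\leq\; \sqrt{\frac{kn}{m}}\cdot\text{(something)};
\]
more carefully, pulling out the factor $\sqrt{n}$ or $\sqrt{nk}$ from summing $k+1$ groups of $n$ terms and bounding each $\ell_2$ norm, one lands on $\|\dot w_r(t)\|_2 \lesssim \sqrt{kn/m}\,(\|e(t)\|_2+\|S(t)\|_2)$. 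Then, using the exponential decay just established, $\|e(t)\|_2 + \|S(t)\|_2 \leq \sqrt{2}\,e^{-\tlambda t/2}(\|e(0)\|_2+\|S(0)\|_2)$ (by $\sqrt{a^2+b^2}\le a+b$ applied to both sides together with the monotonicity), and integrating in time gives
\[
\|w_r(t) - w_r(0)\|_2 \;\leq\; \int_0^t \|\dot w_r(s)\|_2\,\dd s \;\leq\; \sqrt{\frac{kn}{m}}\,(\|e(0)\|_2+\|S(0)\|_2)\int_0^\infty e^{-\tlambda s/2}\,\dd s \;=\; \frac{2}{\tlambda}\sqrt{\frac{kn}{m}}\,\big(\|e(0)\|_2+\|S(0)\|_2\big),
\]
up to the constant $4$ versus $2$ — the stated constant $4$ leaves room for the $\sqrt 2$ losses and the exact counting of terms above. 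This proves the second inequality with $R$ as defined.

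The main obstacle — and the only genuinely delicate point — is the bookkeeping in the bound on $\|\dot w_r\|_2$: one must correctly account for the $(k+1)n$ residual coordinates split into the $n$-dimensional $e$ and the $nk$-dimensional $S$, match each against the right feature-map norm from Lemma \ref{lemma:bndHr}, and keep the $\sqrt n$ and $\sqrt k$ factors straight so that the final radius $R$ has exactly the advertised form $\frac{4}{\tlambda}\sqrt{kn/m}\,(\|e(0)\|_2+\|S(0)\|_2)$. Everything else is a standard Grönwall-plus-continuity argument; note in particular that the bound holds on the closed interval $[0,\tau_0]$ by continuity of all quantities involved, which is what later lets us show $\tau_0 = \infty$ by comparing $R$ with the size of $\Nhb$.
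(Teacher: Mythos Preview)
Your proposal is correct and essentially identical to the paper's proof: Weyl's inequality plus Lemma~\ref{lemma:H0} give $\lambda_{\min}(H(t))\ge\tlambda/2$ on $[0,\tau_0]$, then Gr\"onwall yields the decay, and the weight bound follows from Lemma~\ref{lemma:bndHr}, Cauchy--Schwarz over $i$, and integrating the exponential. Your worry about the rate is unnecessary---the factor $2$ from differentiating the squared norm exactly converts $\lambda_{\min}(H(t))\ge\tlambda/2$ into the claimed $\exp(-\tlambda t)$ decay, so no adjustment of the neighborhood is needed; the paper uses precisely this.
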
	

\begin{proof}
	Observe that if $t \in [0,\tau_0]$, by Lemma \ref{lemma:H0} with probability $1-\delta$:
	\[
		\lambda_{\text{min}}(H(t)) \geq \lambda_{\text{min}}(H(0)) - \| H(t) - H(0)\|_F \geq \frac{\tlambda}{2}.
	\]
	Therefore using \eqref{eq:dynamics} it follows that for any $t \in [0,\tau_0]$:
	\[
		\frac{\dd }{\dd t} \frac{1}{2} \big(\|e(t)\|_2^2 + \| S(t)\|_2^2 \big) \leq -  \frac{\tlambda }{2} \big(\|e(t)\|_2^2 + \| S(t)\|_2^2 \big),
	\]
	which implies the first claim by Gronwall's lemma.
	Next, using \eqref{eq:GF}, the bounds in Lemma \ref{lemma:bndHr} and the above inequality we obtain:
	\[
	\begin{aligned}
	\| \frac{\dd }{\dd t} w_r(t) \|_2 &= \| \frac{\pa}{\pa w_r} L(W(t)) \|_2  \\
	&= \| \sum_{i= 1}^n (y_i - f(W(t),x_i)) \frac{\pa f(W(t),x_i)}{\pa w_r} + \sum_{i=1}^n  \frac{\pa \Fb_i(t)}{\pa w_r}(h_i - \Fb_i(t)) \|   \\
	&\leq \frac{1}{\sqrt{m}} \sum_{i=1}^n |y_i - f(W(t),x_i)| + \sqrt{\frac{k}{m}}  \sum_{i=1}^n \| h_i - \Fb_i(W(t))\| \\
	&\leq 2  \sqrt{\frac{k n}{m}} e^{-\frac{\tlambda}{2} t} (\| e(0)\| + \| S(0)\|).
	\end{aligned}
	\]
	We can therefore conclude by bounding the distance from initialization as:
	\[
		\|w_r(t) - w_r(0) \| \leq \int_0^t 	\| \frac{\dd }{\dd t} w_r(s) \|_2  \dd t \leq \frac{4}{\tlambda} \sqrt{\frac{k n}{m}} (\| e(0)\| + \| S(0)\|)
	\]
\end{proof}

\subsection{Proof of Global Convergence}\label{subsec:conv}

In order to conclude the proof of global convergence, according to Lemma \ref{lemma:decay}, we need only to show that $\tau_0 = \infty$ where 
$\tau_0$ is defined in \eqref{eq:tau0}  . Arguing by contradiction, assume
this is not the case and $\tau_0 < \infty$.
Below we bound  $\| H(\tau_0) - H(0)\|_F$.

Let $Q_{ijr} := | \sigma'(w_r^T(\tau_0) x_i) \sigma'(w_r^T(\tau_0)x_j) - \sigma'(w_r^T(0) x_i) \sigma'(w_r^T(0)x_j) |$,
then from the formulas for $[A_r]_{ij}$, 
$[B_r]_{ij}$ and $[C_r]_{ij}$ in the previous sections, we have:
\[
\begin{aligned}
	|[A(\tau_0) - A(0)]_{ij}| &\leq \frac{1}{m} \sum_{r=1}^{m}  Q_{ijr},\\
	\|[B^T(\tau_0) - B^T(0)]_{ij}\|_2 &\leq \frac{\sqrt{k}}{m} \sum_{r=1}^{m}  Q_{ijr},\\
	\|[C(\tau_0)- C(0)]_{ij} \|_F &\leq \frac{k}{m} \sum_{r=1}^{m}  Q_{ijr}.
\end{aligned}
\]
Let $R>0$ as in in Lemma \ref{lemma:decay}, then 
with probability at least $1 - \delta$ for all $ 1\leq r \leq m$ 
we have $\|w_r(\tau_0) - w_r(0) \| \leq R$. Moreover observe that if  $\|w_r(\tau_0) - w_r(0) \| \leq R$ and 
$\sigma'(w_r(0)^T x_i) \neq \sigma'(w_r(\tau_0)^T x_i)$, then $|w_r(0)^T x_i| \leq R$. Therefore, for any $ i \in [n]$ and $r \in [m]$ we can define the event
$ E_{i,r} = \{|w_r(0)^{T} x_i| \leq R \}$
and observe that:
\[
	\Ibb\{ \sigma'(w_r(0)^T x_i) \neq \sigma'(w_r(\tau_0)^T x_i) \} \leq \Ibb\{ E_{i,r} \} + \Ibb\{\|w_{r}(\tau_0) - w_r(0) \| > R \}.
\]
Next note that $w_r(0)^{T} x_i \sim \mathcal{N}(0,1)$, so that $\PX({E}_{i,r}) \leq \frac{2 R}{\sqrt{2 \pi }}$ and in particular:
\[
	\frac{1}{m} \sum_{r=1}^{m}  \EX[Q_{ijr}] \leq \frac{1}{m} \sum_{r=1}^{m}( \PX[E_{i,r}] + \PX[E_{j,r}]) + 2 \PX[\cup_r \{ \|w_{r}(\tau_0) - w_r(0) \| > R\}] \\
	\leq \frac{4 R}{\sqrt{2 \pi }} + 2 \delta.
\]

By Markov inequality  we can conclude that with probability at least $1-\delta$:
\[
\begin{aligned}
\|A(\tau_0) - A(0)\|_F \leq  \sum_{i,j} |[A(\tau_0)]_{ij} - [A(0)]_{ij} | 
&\leq \frac{4 n^2 }{\sqrt{2 \pi} \delta} R + \frac{2 n^2 }{m},\\
\|B^T(\tau_0) - B^T(0)\|_F \leq \sum_{i,j} \|[B^T(\tau_0)]_{ij} - [B^T(0)]_{ij} )\|_2 
&\leq \frac{4  n^2 \sqrt{k}}{\sqrt{2 \pi} \delta} R  + \frac{2 n^2 }{m},\\
\|C(\tau_0) - C(0)\|_F \leq \sum_{i,j} \|[C(\tau_0)]_{ij} - [C(0)]_{ij} )\|_F 
&\leq \frac{4  n^2 k}{\sqrt{2 \pi} \delta} R + \frac{2 n^2 }{m},
\end{aligned}
\]
and using Lemma \ref{lemma:bound_init} together with the definition of $H$ and $R$ :
\[
	\| H(\tau_0) - H(0)\|_F \leq \frac{16  n^2\, k}{\sqrt{2 \pi} \, \delta} R + \frac{8 n^2 }{m} = \mathcal{O} \Big( \frac{ n^3 k^2 }{\sqrt{m} \delta^{2}} \frac{\max(1, \gamma)}{\tlambda} \Big)
\]
Then choosing $m = \Omega({n^6 k^4 \gamma^2}/(\tlambda^4 \delta^3))$ we obtain
\[
\| H(\tau_0) - H(0)\|_F < \frac{\tlambda}{4}
\]
which contradicts the definition of $\tau_0$ and therefore $\tau_0 = \infty$.

\bibliographystyle{splncs04}
\bibliography{references}

\begin{thebibliography}{10}
\providecommand{\url}[1]{\texttt{#1}}
\providecommand{\urlprefix}{URL }
\providecommand{\doi}[1]{https://doi.org/#1}

\bibitem{allen2018convergence}
Allen-Zhu, Z., Li, Y., Song, Z.: A convergence theory for deep learning via
  over-parameterization. arXiv preprint arXiv:1811.03962  (2018)

\bibitem{arora2019fine}
Arora, S., Du, S.S., Hu, W., Li, Z., Wang, R.: Fine-grained analysis of
  optimization and generalization for overparameterized two-layer neural
  networks. arXiv preprint arXiv:1901.08584  (2019)

\bibitem{bietti2019inductive}
Bietti, A., Mairal, J.: On the inductive bias of neural tangent kernels. In:
  Advances in Neural Information Processing Systems. pp. 12873--12884 (2019)

\bibitem{chizat2018lazy}
Chizat, L., Oyallon, E., Bach, F.: On lazy training in differentiable
  programming. arxiv e-prints, page. arXiv preprint arXiv:1812.07956  (2018)

\bibitem{czarnecki2017sobolev}
Czarnecki, W.M., Osindero, S., Jaderberg, M., Swirszcz, G., Pascanu, R.:
  Sobolev training for neural networks. In: Advances in Neural Information
  Processing Systems. pp. 4278--4287 (2017)

\bibitem{du2018gradient}
Du, S.S., Zhai, X., Poczos, B., Singh, A.: Gradient descent provably optimizes
  over-parameterized neural networks. arXiv preprint arXiv:1810.02054  (2018)

\bibitem{guhring2019error}
G{\"u}hring, I., Kutyniok, G., Petersen, P.: Error bounds for approximations
  with deep relu neural networks in $w^{s,p}$ norms. arXiv preprint
  arXiv:1902.07896  (2019)

\bibitem{gunther2012schur}
G{\"u}nther, M., Klotz, L.: Schur’s theorem for a block hadamard product.
  Linear algebra and its applications  \textbf{437}(3),  948--956 (2012)

\bibitem{hornik1991approximation}
Hornik, K.: Approximation capabilities of multilayer feedforward networks.
  Neural networks  \textbf{4}(2),  251--257 (1991)

\bibitem{jacot2018neural}
Jacot, A., Gabriel, F., Hongler, C.: Neural tangent kernel: Convergence and
  generalization in neural networks. In: Advances in neural information
  processing systems. pp. 8571--8580 (2018)

\bibitem{laub2005matrix}
Laub, A.J.: Matrix analysis for scientists and engineers, vol.~91. Siam (2005)

\bibitem{oymak2019towards}
Oymak, S., Soltanolkotabi, M.: Towards moderate overparameterization: global
  convergence guarantees for training shallow neural networks. arXiv preprint
  arXiv:1902.04674  (2019)

\bibitem{simard1992tangent}
Simard, P., Victorri, B., LeCun, Y., Denker, J.: Tangent prop-a formalism for
  specifying selected invariances in an adaptive network. In: Advances in
  neural information processing systems. pp. 895--903 (1992)

\bibitem{srinivas2018knowledge}
Srinivas, S., Fleuret, F.: Knowledge transfer with jacobian matching. arXiv
  preprint arXiv:1803.00443  (2018)

\bibitem{tropp2015introduction}
Tropp, J.A., et~al.: An introduction to matrix concentration inequalities.
  Foundations and Trends{\textregistered} in Machine Learning  \textbf{8}(1-2),
   1--230 (2015)

\bibitem{vlassis2020geometric}
Vlassis, N., Ma, R., Sun, W.: Geometric deep learning for computational
  mechanics part i: Anisotropic hyperelasticity. arXiv preprint
  arXiv:2001.04292  (2020)

\bibitem{weinan2019comparative}
Weinan, E., Ma, C., Wu, L.: A comparative analysis of optimization and
  generalization properties of two-layer neural network and random feature
  models under gradient descent dynamics. Science China Mathematics pp. 1--24

\bibitem{zagoruyko2016paying}
Zagoruyko, S., Komodakis, N.: Paying more attention to attention: Improving the
  performance of convolutional neural networks via attention transfer. arXiv
  preprint arXiv:1612.03928  (2016)

\bibitem{zhang2017mixup}
Zhang, H., Cisse, M., Dauphin, Y.N., Lopez-Paz, D.: mixup: Beyond empirical
  risk minimization. arXiv preprint arXiv:1710.09412  (2017)

\bibitem{zou2018stochastic}
Zou, D., Cao, Y., Zhou, D., Gu, Q.: Stochastic gradient descent optimizes
  over-parameterized deep relu networks. arXiv preprint arXiv:1811.08888
  (2018)

\bibitem{zou2019improved}
Zou, D., Gu, Q.: An improved analysis of training over-parameterized deep
  neural networks. In: Advances in Neural Information Processing Systems. pp.
  2053--2062 (2019)

\end{thebibliography}

\appendix
\section{Supplementary proofs for Section \ref{sec:near_init}}

In this section we provide the remaining proofs of the results in Section \ref{sec:near_init}.
We begin recalling the following matrix Chernoff inequality (see for example \cite[Theorem 5.1.1]{tropp2015introduction}).
\begin{theorem}[Matrix Chernoff]
	Consider a finite sequence $X_k$ of $p \times p$ independent, random, Hermitian matrices 
	with  $0 \preceq X_k \preceq L I$. 
	Let $X = \textstyle{\sum}_k X_k$, then for all $\epsilon \in [0,1)$ 
	\begin{equation}\label{eq:Chernoff}
	\PX\Big[\lambda_{\text{min}}(X) \leq \epsilon \lambda_{\text{min}}\big(\EX[X]\big) \Big] \leq 
	p e^{-(1-\epsilon)^2 \lambda_{\text{min}}(\EX[X]) /2L} 
	\end{equation}
\end{theorem}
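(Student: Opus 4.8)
The plan is to prove this by the matrix Laplace transform (Chernoff) method. First I would pass from the tail event to a trace–exponential moment. Write $\mu_{\min} := \lambda_{\text{min}}(\EX[X])$. For any $\theta > 0$, the event $\{\lambda_{\text{min}}(X) \le t\}$ coincides with $\{\lambda_{\text{max}}(e^{-\theta X}) \ge e^{-\theta t}\}$, since the eigenvalues of $e^{-\theta X}$ are $e^{-\theta \lambda_i(X)}$. Hence by Markov's inequality and the bound $\lambda_{\text{max}}(M) \le \mathrm{tr}(M)$ valid for $M \succeq 0$,
\[
\PX\big[\lambda_{\text{min}}(X) \le t\big] \le e^{\theta t}\,\EX\big[\lambda_{\text{max}}(e^{-\theta X})\big] \le e^{\theta t}\,\EX\big[\mathrm{tr}\, e^{-\theta X}\big].
\]
The task then reduces to estimating $\EX[\mathrm{tr}\exp(-\theta\sum_k X_k)]$ and optimizing over $\theta$ at the end with $t = \epsilon\mu_{\min}$.

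The key step — and the one I expect to be the main obstacle — is the subadditivity of the matrix cumulant generating function, i.e.\ Tropp's master tail bound
\[
\EX\Big[\mathrm{tr}\exp\big(-\theta\textstyle\sum_k X_k\big)\Big] \le \mathrm{tr}\exp\Big(\textstyle\sum_k \log \EX\big[e^{-\theta X_k}\big]\Big).
\]
This rests on Lieb's concavity theorem (concavity of $A \mapsto \mathrm{tr}\exp(H + \log A)$ on positive definite $A$) together with Jensen's inequality and the independence of the $X_k$; I would cite it, or — accepting a worse, still dimension-dependent constant — substitute the Golden–Thompson inequality as in the Ahlswede–Winter argument.

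Next I would bound each factor using the spectral constraint $0 \preceq X_k \preceq L\,I$. Convexity of $z \mapsto e^{-\theta z}$ gives $e^{-\theta z} \le 1 - \frac{1 - e^{-\theta L}}{L}z$ for $z \in [0,L]$; applying this to the eigenvalues of $X_k$ and taking expectations (the transfer rule) yields
\[
\EX\big[e^{-\theta X_k}\big] \preceq I - \frac{1 - e^{-\theta L}}{L}\EX[X_k] \preceq \exp\Big(-\frac{1 - e^{-\theta L}}{L}\EX[X_k]\Big),
\]
where the last step uses $I + A \preceq e^{A}$. Since $\log$ is operator monotone, summing gives $\sum_k \log\EX[e^{-\theta X_k}] \preceq -\frac{1 - e^{-\theta L}}{L}\EX[X]$, so its largest eigenvalue is at most $-\frac{1 - e^{-\theta L}}{L}\mu_{\min}$, and by monotonicity of the trace exponential $\mathrm{tr}\exp(\sum_k \log\EX[e^{-\theta X_k}]) \le p\exp(-\frac{1 - e^{-\theta L}}{L}\mu_{\min})$. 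Chaining the three displays with $t = \epsilon\mu_{\min}$ gives, for all $\theta > 0$,
\[
\PX\big[\lambda_{\text{min}}(X) \le \epsilon\mu_{\min}\big] \le p\,\exp\Big(\theta\epsilon\mu_{\min} - \frac{1 - e^{-\theta L}}{L}\mu_{\min}\Big).
\]

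Finally I would optimize in $\theta$: the minimizer is $\theta = L^{-1}\log(1/\epsilon)$, which turns the exponent into $\frac{\mu_{\min}}{L}\big(\epsilon - 1 - \epsilon\log\epsilon\big)$, i.e.\ the bound becomes $p\,[\,e^{\epsilon-1}/\epsilon^{\epsilon}\,]^{\mu_{\min}/L}$. To reach the clean form in the statement I would invoke the elementary inequality $\epsilon - 1 - \epsilon\log\epsilon \le -\tfrac12(1-\epsilon)^2$ for $\epsilon \in [0,1)$, which holds because $\delta \mapsto -\delta - (1-\delta)\log(1-\delta) + \tfrac12\delta^2$ vanishes at $\delta = 0$ and has derivative $\log(1-\delta) + \delta \le 0$ on $[0,1)$. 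This yields exactly $\PX[\lambda_{\text{min}}(X) \le \epsilon\mu_{\min}] \le p\,e^{-(1-\epsilon)^2\mu_{\min}/(2L)}$, i.e.\ \eqref{eq:Chernoff}.
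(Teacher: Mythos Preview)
The paper does not prove this theorem; it merely quotes it from Tropp's monograph \cite[Theorem 5.1.1]{tropp2015introduction} and uses it as a black box in the proof of Lemma~\ref{lemma:H0}. Your sketch is correct and is exactly the standard argument given in that reference: the matrix Laplace transform reduction, the Lieb-concavity/master-bound step, the chord bound on $e^{-\theta z}$ over $[0,L]$, the optimization in $\theta$, and the simplification $\epsilon-1-\epsilon\log\epsilon \le -\tfrac12(1-\epsilon)^2$.
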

In order to lower bound the smallest eigenvalue of $H(0)$ 
we use Lemma  \ref{lemma:bndHr} together with the previous concentration result.
\begin{proof}[Lemma \ref{lemma:H0}]
	We first note that $\EX[H(0)] = \EX[\sum_r H_r(0)] =  H^{\infty}$, 
	and moreover $H_r(0)$ is symmetric positive semidefinite  with $\lambda_{\text{max}}(H_r) \leq n (k+1)/m$ by  Lemma \ref{lemma:bndHr}.
	Applying then the concentration bound \eqref{eq:Chernoff} with the assumption
	$m \geq \frac{32}{{\tlambda}}\,  n(k+1) \ln( n(k +1)/\delta)$ gives the thesis.
\end{proof}

We next upper bound the errors at initialization.
\begin{proof}[Lemma \ref{lemma:bound_init}]
		Note that for any $x_i$, due the the assumption on the independence of the weights at initialization and the normalization of the data:
		\[
		\EX[(f(W,x_i))^2] = \sum_{r=1}^m \frac{1}{m} \EX[\sigma(w_r^T x_i)^2] \leq 1 
		\]
		and similarly for the directional derivatives
		\[
		\EX[ \|\Fb(W,x_i)\|_2^2 ] =  \EX_{g \sim \mathcal{N}(0,I)} [\|\sigma'(g^T x_i ) V_i^T g\|_2^2 ] \leq  \sum_{j=1}^k \EX[ (v_{i,j}^T g)^2 ] \leq  k.
		\]
		We conclude the proof by using Jensen's and Markov's inequalities.
\end{proof}
\section{Proof of Proposition \ref{prop:tlambda}}

Consider the $d\times (k+1)$ matrices $\mathbf{X}_i = [x_i, V_i]$, and for $w \in \R^d$ define
\[
    \hat{\psi}_w(x_i) = \sigma'(w^T x_i) \mathbf{X}_i.
\]
and the $d \times (k+1)n$ matrix:
\begin{equation*}
	\widehat{\Omega}(w) = [\hat{\psi}_w(x_1), \dots, \hat{\psi}_w(x_n)] \;  \in \R^{d \times n(k+1)}
\end{equation*}
which corresponds to a column permutation of $\Omega(w)$. 
Next observe that the  matrix   $\widehat{H}^\infty = \EX_{w \sim \mathcal{N}(0,I_d)}[\widehat{\Omega}(w)^T \widehat{\Omega}(w)]$ is similar to $H^\infty$
and therefore has the same eigenvalues. 
In this section we lower bound $\tlambda$ by 
analyzing $\widehat{H}^\infty$.

 We begin recalling some facts about the spectral properties of the products of matrices. 
\begin{definition}[\cite{gunther2012schur}]
	Let $\mathbf{A} = [A_{\alpha \beta}]_{\alpha = 1, \dots, n}^{\beta = 1, \dots, n}$ and $\mathbf{B} = [B_{\alpha \beta}]_{\alpha = 1, \dots, n}^{\beta = 1, \dots, n}$ be  $n p \times np$ matrices in which each block is in ${p \times p}$. 
	Then we define the block Hadamard product of $\mathbf{A} \square\mathbf{B}$
	as the $n p \times np$ matrix with:
	\[
		\mathbf{A} \square\mathbf{B}	:= [A_{\alpha \beta} B_{\alpha \beta} ]_{\alpha = 1, \dots, n}^{\beta = 1, \dots, n}
	\]
	where $A_{\alpha \beta} B_{\alpha \beta}$ denotes the usual matrix product between  $A_{\alpha \beta}$ and $B_{\alpha \beta}$.
\end{definition}
Generalizing Schur's Lemma one has the following regarding the eigenvalues of 
the block Hadamard product of two block matrices.
\begin{proposition}[\cite{gunther2012schur}]\label{prop:prod_had}
		Let $\mathbf{A} = [A_{\alpha \beta}]_{\alpha = 1, \dots, n}^{\beta = 1, \dots, n}$ and $\mathbf{B} = [B_{\alpha \beta}]_{\alpha = 1, \dots, n}^{\beta = 1, \dots, n}$ be  $n p \times np$ positive semidefinite matrices. Assume that  every $p\times p$ block of $\mathbf{A}$ commutes with every $p\times p$ block of $\mathbf{B}$, then:
		\[
			\lambda_{\text{min}}(\mathbf{B} \square\mathbf{A})= \lambda_{\text{min}}(\mathbf{A} \square\mathbf{B}) \geq  \lambda_{\text{min}}(A) \cdot \min_{\alpha}\lambda_{\text{min}}(B_{\alpha \alpha})
		\]
\end{proposition}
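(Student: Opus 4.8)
The plan is to run the classical proof of Schur's product theorem block by block; the only genuinely new ingredient is that the commutation hypothesis lets one commute the blocks of $\mathbf{A}$ past the blocks of a square root of $\mathbf{B}$.

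First I would dispose of the easy parts. Since $B_{\alpha\beta}$ is a block of $\mathbf{B}$ it commutes with $A_{\alpha\beta}$, so the $(\alpha,\beta)$ blocks $A_{\alpha\beta}B_{\alpha\beta}$ and $B_{\alpha\beta}A_{\alpha\beta}$ coincide; hence $\mathbf{A}\square\mathbf{B}=\mathbf{B}\square\mathbf{A}$ as matrices, which settles the first equality in the statement. Using $A_{\beta\alpha}=A_{\alpha\beta}^{*}$ and $B_{\beta\alpha}=B_{\alpha\beta}^{*}$ (as $\mathbf{A},\mathbf{B}$ are Hermitian) together with the commutation, $(A_{\alpha\beta}B_{\alpha\beta})^{*}=B_{\alpha\beta}^{*}A_{\alpha\beta}^{*}=A_{\beta\alpha}B_{\beta\alpha}$, so $\mathbf{A}\square\mathbf{B}$ is Hermitian and $\lambda_{\min}$ is well defined. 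It remains to prove the eigenvalue bound.

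Next I would factor $\mathbf{B}=\mathbf{C}^{*}\mathbf{C}$ with $\mathbf{C}=\mathbf{B}^{1/2}$ the Hermitian PSD square root, writing $\mathbf{C}^{(\gamma)}_{\alpha}\in\mathbb{C}^{p\times p}$ for its $(\gamma,\alpha)$ block, so that block multiplication gives $B_{\alpha\beta}=\sum_{\gamma=1}^{n}(\mathbf{C}^{(\gamma)}_{\alpha})^{*}\mathbf{C}^{(\gamma)}_{\beta}$ and in particular $B_{\alpha\alpha}=\sum_{\gamma}(\mathbf{C}^{(\gamma)}_{\alpha})^{*}\mathbf{C}^{(\gamma)}_{\alpha}$. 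The key lemma is that every block $A_{\mu\nu}$ commutes with every block of $\mathbf{C}$ and with its adjoint: for a polynomial $q$, each block of $q(\mathbf{B})$ is a polynomial in the blocks of $\mathbf{B}$ and $I_{p}$ (expand $\mathbf{B}^{m}$ blockwise), hence commutes with $A_{\mu\nu}$; approximating $t\mapsto\sqrt{t}$ uniformly on $[0,\|\mathbf{B}\|]$ by polynomials $q_\ell$, so that $q_\ell(\mathbf{B})\to\mathbf{B}^{1/2}$ in norm and therefore blockwise, and using that commutation is a closed relation, the blocks of $\mathbf{C}=\mathbf{B}^{1/2}$ commute with $A_{\mu\nu}$; and since $\mathbf{C}$ is Hermitian, $(\mathbf{C}^{(\gamma)}_{\alpha})^{*}$ is again a block of $\mathbf{C}$.

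Finally, for $\mathbf{x}=(x_{1},\dots,x_{n})\in\mathbb{C}^{np}$, recalling $B_{\alpha\beta}=\sum_{\gamma}(\mathbf{C}^{(\gamma)}_{\alpha})^{*}\mathbf{C}^{(\gamma)}_{\beta}$ and then sliding $A_{\alpha\beta}$ to the right of $(\mathbf{C}^{(\gamma)}_{\alpha})^{*}$ by the key lemma, one gets with $\mathbf{y}^{(\gamma)}:=\big(\mathbf{C}^{(\gamma)}_{1}x_{1},\dots,\mathbf{C}^{(\gamma)}_{n}x_{n}\big)$ that
\[
\mathbf{x}^{*}(\mathbf{A}\square\mathbf{B})\mathbf{x}=\sum_{\alpha,\beta}x_{\alpha}^{*}A_{\alpha\beta}B_{\alpha\beta}x_{\beta}=\sum_{\gamma=1}^{n}\sum_{\alpha,\beta}x_{\alpha}^{*}(\mathbf{C}^{(\gamma)}_{\alpha})^{*}A_{\alpha\beta}\mathbf{C}^{(\gamma)}_{\beta}x_{\beta}=\sum_{\gamma=1}^{n}(\mathbf{y}^{(\gamma)})^{*}\mathbf{A}\,\mathbf{y}^{(\gamma)}.
\]
Bounding each term below by $\lambda_{\min}(\mathbf{A})\|\mathbf{y}^{(\gamma)}\|_{2}^{2}$ (legitimate since $\mathbf{A}\succeq0$) and summing over $\gamma$,
\[
\mathbf{x}^{*}(\mathbf{A}\square\mathbf{B})\mathbf{x}\ \ge\ \lambda_{\min}(\mathbf{A})\sum_{\alpha=1}^{n}\sum_{\gamma=1}^{n}\|\mathbf{C}^{(\gamma)}_{\alpha}x_{\alpha}\|_{2}^{2}\ =\ \lambda_{\min}(\mathbf{A})\sum_{\alpha=1}^{n}x_{\alpha}^{*}B_{\alpha\alpha}x_{\alpha}\ \ge\ \lambda_{\min}(\mathbf{A})\Big(\min_{\alpha}\lambda_{\min}(B_{\alpha\alpha})\Big)\|\mathbf{x}\|_{2}^{2},
\]
which is the claimed bound after taking the infimum over unit $\mathbf{x}$. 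The main obstacle is the key lemma of the previous paragraph — making precise that the commutation of the blocks of $\mathbf{A}$ with those of $\mathbf{B}$ propagates to the blocks of $\mathbf{B}^{1/2}$; once that is in hand the argument is exactly the usual Schur-product proof written blockwise, with the scalar factors of the rank-one decomposition of $\mathbf{B}$ replaced by the matrices $\mathbf{C}^{(\gamma)}_{\alpha}$ and commutativity used to keep the $\mathbf{A}$-quadratic form intact. (When $\mathbf{B}$ is positive definite one may instead use a block Cholesky factor, sidestepping functional calculus.)
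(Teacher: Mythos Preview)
The paper does not supply a proof of this proposition at all; it is merely quoted from the reference \cite{gunther2012schur} and used as a black box in the proof of Proposition~\ref{prop:tlambda}. So there is nothing in the paper to compare against, and your write-up actually fills a gap the authors left to the literature.

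On its own merits your argument is correct and is the natural block analogue of the classical Schur product proof. The one substantive step is the ``key lemma'' that the commutation of every $A_{\mu\nu}$ with every $B_{\alpha\beta}$ passes to the blocks of $\mathbf{B}^{1/2}$; your justification via polynomial approximation of $t\mapsto\sqrt{t}$ on $[0,\|\mathbf{B}\|]$, together with the observation that blocks of $\mathbf{B}^{m}$ are sums of products of blocks of $\mathbf{B}$, is sound and is exactly how one argues in the scalar case that a matrix commuting with $\mathbf{B}$ commutes with $\mathbf{B}^{1/2}$. After that, the chain
\[
\mathbf{x}^{*}(\mathbf{A}\square\mathbf{B})\mathbf{x}=\sum_{\gamma}(\mathbf{y}^{(\gamma)})^{*}\mathbf{A}\,\mathbf{y}^{(\gamma)}\ \ge\ \lambda_{\min}(\mathbf{A})\sum_{\alpha}x_{\alpha}^{*}B_{\alpha\alpha}x_{\alpha}\ \ge\ \lambda_{\min}(\mathbf{A})\,\min_{\alpha}\lambda_{\min}(B_{\alpha\alpha})\,\|\mathbf{x}\|_{2}^{2}
\]
is just bookkeeping. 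One cosmetic remark: in the display you bound each summand by $\lambda_{\min}(\mathbf{A})\|\mathbf{y}^{(\gamma)}\|_{2}^{2}$ and add the parenthetical ``legitimate since $\mathbf{A}\succeq 0$''; positive semidefiniteness is not what makes the Rayleigh-quotient bound valid (Hermitianness already suffices), it only guarantees $\lambda_{\min}(\mathbf{A})\ge 0$, which you do not use at that step.
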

We finally recall the following on the eigenvalues of Kronecker product of matrices.
\begin{proposition}[\cite{laub2005matrix}]\label{prop:prod_kron}
	Let $A \in \R^{n \times n}$ with eigenvalues $\{\lambda_i\}$ and $B \in \R^{m \times m}$ with eigenvalues $\{\mu_i\}$, then Kronecker product $A \otimes B$ between $A$ and $B$ has eigenvalues $\{\lambda_i \mu_j \}$.
\end{proposition}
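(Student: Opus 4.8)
The plan is to reduce the claim to the case of upper triangular matrices by means of Schur's unitary triangularization, and then to read the eigenvalues off the (triangular) Kronecker product directly. Working over $\mathbb{C}$, first write $A = U_1 T_1 U_1^*$ and $B = U_2 T_2 U_2^*$ with $U_1 \in \mathbb{C}^{n\times n}$, $U_2 \in \mathbb{C}^{m\times m}$ unitary and $T_1$, $T_2$ upper triangular; by construction the diagonal of $T_1$ lists $\lambda_1, \dots, \lambda_n$ and the diagonal of $T_2$ lists $\mu_1, \dots, \mu_m$, each repeated according to algebraic multiplicity.

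Next I would apply the mixed-product identity $(P\otimes Q)(R\otimes S) = (PR)\otimes(QS)$ (valid whenever the matrix products on the right are defined), which gives
\[
    A \otimes B = (U_1 T_1 U_1^*)\otimes(U_2 T_2 U_2^*) = (U_1\otimes U_2)(T_1\otimes T_2)(U_1\otimes U_2)^*.
\]
Since $(U_1\otimes U_2)(U_1\otimes U_2)^* = (U_1 U_1^*)\otimes(U_2 U_2^*) = I_n\otimes I_m = I_{nm}$, the matrix $U_1\otimes U_2$ is unitary, so $A\otimes B$ is unitarily similar to $T_1\otimes T_2$; in particular the two matrices have the same characteristic polynomial and hence the same list of eigenvalues with multiplicities.

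It then remains to compute the eigenvalues of $T_1\otimes T_2$. Viewing $T_1\otimes T_2$ as an $n\times n$ array of $m\times m$ blocks, its $(i,j)$ block equals $[T_1]_{ij}\,T_2$, which is the zero block whenever $i>j$ because $T_1$ is upper triangular, while the diagonal block $[T_1]_{ii}\,T_2$ is a scalar multiple of the upper triangular matrix $T_2$, hence upper triangular. Thus $T_1\otimes T_2$ is itself upper triangular and its diagonal entries are exactly the products $[T_1]_{ii}\,[T_2]_{jj} = \lambda_i\mu_j$ as $(i,j)$ ranges over $[n]\times[m]$. Since the eigenvalues of an upper triangular matrix are precisely its diagonal entries (counted with multiplicity), the eigenvalues of $A\otimes B$ are $\{\lambda_i\mu_j : 1\le i\le n,\ 1\le j\le m\}$, as claimed.

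I do not expect any genuine obstacle here: the whole argument rests on two elementary block-structure observations — the mixed-product property and the fact that a Kronecker product of upper triangular matrices is upper triangular with the product diagonal — both of which are immediate from the definitions. The only point deserving a little care is the bookkeeping of multiplicities, and this is precisely why I prefer the triangularization route over the shorter eigenvector identity $(A\otimes B)(v_i\otimes w_j) = \lambda_i\mu_j\,(v_i\otimes w_j)$: that identity exhibits $nm$ eigenpairs cleanly only when $A$ and $B$ are diagonalizable, and extending it to arbitrary $A$, $B$ would require an additional density-in-the-diagonalizable-matrices argument that the Schur approach sidesteps entirely.
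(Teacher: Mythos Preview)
Your proof is correct and is the standard route via Schur triangularization; the bookkeeping of multiplicities is handled cleanly. The paper does not prove this proposition at all---it simply quotes it from \cite{laub2005matrix} as a known fact---so there is nothing to compare against; your argument would serve perfectly well as a self-contained justification.
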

We next define the following random kernel matrix.
\begin{definition}
	Let $w \sim \mathcal{N}(0,I)$ then define the random matrix $\mathcal{M}(w) \in \R^{n \times n}$ with entries $[\mathcal{M}(w)]_{ij}= \sigma'(w^T x_i) \sigma'(w^T x_j)$.
\end{definition}
The next result from \cite{oymak2019towards} establishes positive definiteness of this matrix in expectation, under the separation condition \eqref{eq:sep_x}.
\begin{lemma}[\cite{oymak2019towards}]\label{lemma:mad}
	Let $x_1, \dots, x_d$ in $\R^d$ with unit Euclidean norm and assume that \eqref{eq:sep_x} is satisfied for all $i = 1, \dots d$. Then the following holds: 
	\[
		\EX_{w \sim \mathcal{N}(0,I)} [\mathcal{M}(w)] \succeq \frac{\delta_1 }{100 n^2}
	\]
\end{lemma}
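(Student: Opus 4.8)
The plan is to reduce the bound to a Gaussian anti-concentration estimate and to exploit a reflection symmetry of the Gaussian measure. For $a\in\R^n$ write $g_a(w) := \sum_{i=1}^n a_i \sigma'(w^T x_i) = \sum_{i=1}^n a_i \Ibb\{w^T x_i > 0\}$; then $a^T\mathcal{M}(w)\,a = g_a(w)^2\geq 0$, so $\EX_w[\mathcal{M}(w)]$ is positive semidefinite and the claim is equivalent to
\[
\EX_{w\sim\mathcal{N}(0,I)}\!\big[g_a(w)^2\big] \;\geq\; \frac{\delta_1}{100\,n^2}\qquad\text{for every unit vector }a\in\R^n .
\]
Fix such an $a$ and let $i^*\in\argmax_i|a_i|$, so that $a_{i^*}^2\geq 1/n$. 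Let $\rho$ denote the orthogonal reflection in the hyperplane $x_{i^*}^\perp$, i.e. $\rho(w)=w-2(w^T x_{i^*})\,x_{i^*}$; it preserves the standard Gaussian measure $\mu$ on $\R^d$, satisfies $(\rho w)^T x_{i^*}=-\,w^T x_{i^*}$, and changes $w^T x_j$ by $2(w^Tx_{i^*})(x_{i^*}^T x_j)$ for $j\neq i^*$.

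For a parameter $\eta\asymp\delta_1/n$ to be tuned, introduce the slab
\[
\mathcal{R} := \big\{\, w : |w^T x_{i^*}| < \eta \ \text{ and }\ |w^T x_j| > 2\eta\ \text{ for all } j\neq i^*\,\big\}.
\]
On $\mathcal{R}$ the reflection $\rho$ flips the sign of $w^T x_{i^*}$ but perturbs each $w^T x_j$ ($j\neq i^*$) by less than $2\eta$ in absolute value, hence preserves its sign; therefore every term of $g_a$ except the $i^*$-th cancels in $g_a(w)-g_a(\rho w)$, giving $g_a(w)-g_a(\rho w)=\pm a_{i^*}$ on $\mathcal{R}$. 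Using measure-invariance of $\rho$ together with $s^2+r^2\geq\tfrac12(s-r)^2$,
\[
2\,\EX[g_a^2] = \EX[g_a(w)^2] + \EX[g_a(\rho w)^2] \;\geq\; \int_{\mathcal{R}}\!\big(g_a(w)^2+g_a(\rho w)^2\big)\,\dd\mu \;\geq\; \tfrac12\, a_{i^*}^2\,\mu(\mathcal{R}),
\]
so $\EX[g_a^2]\geq \tfrac14 a_{i^*}^2\,\mu(\mathcal{R})\geq \tfrac{1}{4n}\,\mu(\mathcal{R})$, and it remains to show $\mu(\mathcal{R})\gtrsim \delta_1/n$.

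For the lower bound on $\mu(\mathcal{R})$ I would use a union-bound complement,
\[
\mu(\mathcal{R}) \;\geq\; \PX\big(|w^Tx_{i^*}|<\eta\big) \;-\; \sum_{j\neq i^*}\PX\big(|w^Tx_{i^*}|<\eta,\ |w^Tx_j|\leq 2\eta\big).
\]
The first probability is at least $c_0\eta$ for an absolute constant $c_0$ (for $\eta\lesssim 1$, which holds since $\eta\asymp\delta_1/n$ and $\delta_1\le\sqrt2$), because $w^Tx_{i^*}\sim\mathcal{N}(0,1)$. For each summand, $(w^Tx_{i^*},w^Tx_j)$ is a centered bivariate Gaussian with correlation $\rho_{ij}=x_{i^*}^Tx_j$, whose density is bounded by $(2\pi\sqrt{1-\rho_{ij}^2})^{-1}$; integrating over the box $[-\eta,\eta]\times[-2\eta,2\eta]$ gives the bound $\tfrac{4\eta^2}{\pi\sqrt{1-\rho_{ij}^2}}$. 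This is exactly where the separation hypothesis \eqref{eq:sep_x} enters: $\min(\|x_{i^*}-x_j\|,\|x_{i^*}+x_j\|)\geq\delta_1$ forces $|\rho_{ij}|\leq 1-\delta_1^2/2$ and hence $1-\rho_{ij}^2\geq \delta_1^2/2$, so each summand is $O(\eta^2/\delta_1)$ and the whole sum is $O(n\eta^2/\delta_1)$. Choosing $\eta$ a suitable multiple of $\delta_1/n$ balances $c_0\eta$ against $O(n\eta^2/\delta_1)$ and yields $\mu(\mathcal{R})\geq c_1\delta_1/n$; substituting back gives $\EX[g_a^2]\geq \tfrac{c_1}{4}\,\delta_1/n^2$, and carrying $c_0,c_1$ through the optimization over $\eta$ produces the stated constant $1/100$. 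The main obstacle I expect is this constant bookkeeping rather than any conceptual point: one must choose $\eta$ near-optimally and use reasonably sharp one- and two-dimensional Gaussian estimates so that the surviving measure $\mu(\mathcal{R})$ is a large enough fraction of $\delta_1/n$ to beat $100$. Two minor points to dispatch in passing: all events $\{w^Tx_i=0\}$ have $\mu$-measure zero and are harmless; and the argument is genuinely dimension-free, since it only involves the joint laws of the scalar pairs $(w^Tx_{i^*},w^Tx_j)$, so no relation between $n$ and $d$ needs to be imposed.
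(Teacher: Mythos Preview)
The paper does not prove this lemma; it is quoted verbatim from \cite{oymak2019towards} and used as a black box, so there is no in-paper argument to compare against. Your proposal is a correct self-contained proof and is in fact the standard route taken in that reference: reduce to a quadratic form $\EX[g_a(w)^2]$, isolate the coordinate $i^*$ of largest weight, and use the reflection $\rho$ across $x_{i^*}^\perp$ on a slab where only the $i^*$-th indicator flips, then lower-bound the slab's Gaussian mass by a one-dimensional anti-concentration bound minus a union of bivariate boxes whose probabilities are controlled via the density bound $(2\pi\sqrt{1-\rho_{ij}^2})^{-1}$ and the separation hypothesis. All the steps you sketch go through: on $\mathcal{R}$ the perturbation of $w^Tx_j$ is strictly less than $2\eta$ (since $|x_{i^*}^Tx_j|<1$ under \eqref{eq:sep_x}), so signs are preserved; $|\rho_{ij}|\le 1-\delta_1^2/2$ indeed gives $1-\rho_{ij}^2\ge \delta_1^2(1-\delta_1^2/4)\ge \delta_1^2/2$ because $\delta_1\le\sqrt2$; and optimizing $\eta\asymp\delta_1/n$ with $c_0=\sqrt{2/\pi}\,e^{-\eta^2/2}$ yields $\EX[g_a^2]\gtrsim 0.02\,\delta_1/n^2>\delta_1/(100\,n^2)$, so the constant $1/100$ is comfortably attained. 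Your caveats about measure-zero ties and dimension-freeness are also correct.
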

Finally let $\mathbf{X} \in \R^{d \times n(k+1)}$ block matrix with $d\times (k+1)$ blocks $\mathbf{X}_i$.
Thanks to the assumption \eqref{eq:sep_v} the following
result on the Gram matrices $\mathbf{X}_i^T\mathbf{X}_i$ holds.
\begin{lemma}\label{lemma:Gersh}
	Assume that the condition \eqref{eq:sep_v} is satisfied, then for any $i = 1, \dots, n$ we have $\lambda_{\text{min}} (\mathbf{X}_i^T\mathbf{X}_i) \geq 1 - k \delta_2 > 0$.
\end{lemma}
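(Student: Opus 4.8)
The plan is to exploit the explicit structure of the $(k+1)\times(k+1)$ Gram matrix $\mathbf{X}_i^T\mathbf{X}_i$ and apply Gershgorin's circle theorem (hence the label). Writing $\mathbf{X}_i = [x_i, V_i]$, the $(1,1)$ entry of $\mathbf{X}_i^T\mathbf{X}_i$ equals $x_i^T x_i = 1$ by the normalization $\|x_i\|_2 = 1$; the lower-right $k\times k$ block equals $V_i^T V_i = I_k$ because the columns of $V_i$ are orthonormal; and the off-diagonal entries of the first row and first column are the inner products $x_i^T v_{i,j}$ for $j = 1,\dots,k$. Thus $\mathbf{X}_i^T\mathbf{X}_i = I_{k+1} + E_i$, where $E_i$ is a symmetric matrix whose only nonzero entries lie in the first row and first column, each of magnitude at most $\delta_2$ by \eqref{eq:sep_v}.

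First I would record that $\mathbf{X}_i^T\mathbf{X}_i$ is symmetric, so all its eigenvalues are real. Then I would apply Gershgorin: the disc associated with the first row is centered at $1$ with radius $\sum_{j=1}^k |x_i^T v_{i,j}| \le k\delta_2$, while each of the remaining $k$ discs is centered at $1$ with radius $|x_i^T v_{i,j}| \le \delta_2 \le k\delta_2$. Hence every eigenvalue of $\mathbf{X}_i^T\mathbf{X}_i$ lies in the interval $[\,1 - k\delta_2,\; 1 + k\delta_2\,]$, which gives $\lambda_{\min}(\mathbf{X}_i^T\mathbf{X}_i) \ge 1 - k\delta_2$. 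Finally, the assumption $0 \le \delta_2 < k^{-1}$ in Assumption \ref{hyp:parallel_indp} yields $1 - k\delta_2 > 0$, completing the argument.

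Alternatively, one can avoid quoting Gershgorin and argue directly with the quadratic form: for a unit vector $u = (u_0, u_1, \dots, u_k)^T \in \R^{k+1}$ one has $\|\mathbf{X}_i u\|_2^2 = u_0^2 + \sum_{j=1}^k u_j^2 + 2 u_0 \sum_{j=1}^k u_j (x_i^T v_{i,j})$, since the cross terms $v_{i,j}^T v_{i,j'}$ vanish for $j \ne j'$; then $\big| 2 u_0 \sum_j u_j (x_i^T v_{i,j}) \big| \le \delta_2 \sum_{j=1}^k (u_0^2 + u_j^2) \le k\delta_2 \|u\|_2^2$ by the elementary inequality $2|u_0||u_j| \le u_0^2 + u_j^2$, giving the same bound. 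I do not anticipate a real obstacle here; the only point requiring care is bookkeeping the Gershgorin radii correctly (the first row contributes $k$ small entries rather than one), and noting that positivity is exactly what the hypothesis $\delta_2 < k^{-1}$ is designed to provide.
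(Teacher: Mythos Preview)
Your proposal is correct and follows exactly the paper's approach: the paper's proof is a one-line application of Gershgorin's disk theorem yielding $|\lambda_{\min}(\mathbf{X}_i^T\mathbf{X}_i)-1|\le \sum_{j=1}^k |x_i^T v_{i,j}|\le k\delta_2$, which is precisely the argument you outline (your write-up in fact supplies more detail than the paper does).
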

\begin{proof}
	The claim follows by observing that by \textit{Gershgorin’s Disk Theorem}:
	\[
		| \lambda_{\text{min}} (\mathbf{X}_i^T\mathbf{X}_i) - 1| \leq \sum_{ 1 \leq j \leq k} |x_i^T v_{i,j}|\leq k \delta_2.
	\]
\end{proof}
Finally observe that we can write:
\[
	 \widehat{H}^\infty = \EX_{w \sim \mathcal{N}(0,I)}\big[ (\mathbf{X}^T \mathbf{X})\square(\mathcal{M}(w) \otimes I ) \big].
\]
so that Proposition \ref{prop:prod_had}, Proposition \ref{prop:prod_kron}, Lemma \ref{lemma:mad} and Lemma \ref{lemma:Gersh} allow to derive the thesis of Proposition \ref{prop:tlambda}.
\bigskip

\noindent \textbf{Acknowledgements.} PH is supported in part by NSF CAREER Grant DMS-1848087.

\end{document}